\documentclass{article}

\usepackage[preprint]{neurips_2026}

\usepackage[utf8]{inputenc} 
\usepackage[T1]{fontenc}    
\usepackage{hyperref}       
\usepackage{url}            
\usepackage{booktabs}       
\usepackage{amsfonts}       
\usepackage{nicefrac}       
\usepackage{microtype}      
\usepackage{xcolor}         
\usepackage{subcaption}
\usepackage{graphicx}

\usepackage{amsmath}
\usepackage{amssymb}
\usepackage{mathtools}
\usepackage{amsthm}
\usepackage{bm}                 
\usepackage{geometry}
\usepackage{pgfplots}

\newtheorem{proposition}{Proposition}
\newtheorem{definition}{Definition}
\newtheorem{lemma}{Lemma}

\title{B-cos GNNs: Faithful Explanations through Dynamic Linearity}

\author{%
  Joschka Groß\textsuperscript{1,2,3,*}\\
  \And
  Mohammad Shaique Solanki\textsuperscript{1,*}
  \And
  Verena Wolf\textsuperscript{1,2}
}
\newcommand{\norm}[1]{\left\lVert#1\right\rVert}
\newcommand{\abs}[1]{\left\lvert#1\right\rvert}

\newcommand{\ie}{i.\,e.\ }
\newcommand{\eg}{e.\,g.\ }

\newcommand{\dnode}{\ensuremath{p}}

\begin{document}

\maketitle
\vspace{-2em}
\begin{center}
\textbf{\textsuperscript{1}} Saarland Informatics Campus, Saarland University\\
\textbf{\textsuperscript{2}} DFKI~~
\textsuperscript{3} \texttt{jgross@cs.uni-saarland.de}\\
\textsuperscript{*} Equal Contribution.
\end{center}
\vspace{1em}


\begin{abstract}
We introduce B-cos GNNs, an inherently explainable class of graph neural networks whose predictions decompose exactly into per-node, per-feature contributions via a single input-dependent linear map.
B-cos GNNs use linear (sum-based) aggregation and replace non-linear message and update functions with B-cos transforms.
This induces meaningful, task-specific weight-input alignment that is directly accessible through the model's dynamic linearity.
Instance-level explanations follow from a single forward and backward pass, requiring no auxiliary explainer, modified learning objective, or perturbation procedure. 
Instantiated as a GIN, our approach trades small losses in predictive accuracy for state-of-the-art explainability across diverse synthetic and real-world benchmarks, producing explanations orders of magnitude faster than post-hoc baselines.
\end{abstract}

\section{Introduction}
\label{sec: intro}

Graph Neural Networks (GNNs) have emerged as powerful tools for learning from relational data, with state-of-the-art results across recommendation systems~\cite{ying2018graph}, antibiotic discovery~\cite{stokes2020deep}, weather forecasting~\cite{lam2023learning}, and particle physics~\cite{shlomi2020graph}. While predictive accuracy is necessary, it is rarely sufficient: high-stakes deployment, scientific discovery, and the detection of shortcut learning all require understanding \emph{why} a model makes certain predictions. These needs are best met by \emph{faithful}~\cite{jacovi2020towards} explanations, i.e., explanations that reflect the computation the model actually performed rather than approximating it through a separately optimized surrogate. Without faithfulness, plausible-looking explanations may obscure spurious correlations or biased reasoning, undermining both trust and scientific validity.

Existing methods for instance-level GNN explainability fall into three broad categories. \textbf{Generic post-hoc attribution} methods include gradient-based approaches such as Saliency Maps~\cite{simonyan2013deep}, Integrated Gradients~\cite{sundararajan2017axiomatic}, and GradCAM~\cite{pope2019explainability}, as well as decomposition methods like LRP~\cite{bach2015pixel,baldassarre2019explainability}. \textbf{Graph-specific post-hoc} methods such as GNNExplainer~\cite{ying2019gnnexplainer}, PGExplainer~\cite{luo2020parameterized}, GraphMask~\cite{schlichtkrull2020graphmask}, and SubgraphX~\cite{yuan2021on} treat the trained GNN as a black box and optimize an auxiliary model to extract small explanatory subgraphs. While effective, post-hoc methods typically require careful regularization tuning and be fragile~\cite{li2024graph}. Motivated by these limitations, a growing line of work argues that high-stakes domains should rely on models that are interpretable by construction~\cite{rudin2019stop}. The third category, \textbf{inherently explainable GNNs} such as IB-subgraph~\cite{yu2020gib}, DIR~\cite{wu2022dir}, and GSAT~\cite{miao2022interpretable}, integrates the explanation mechanism into the model itself, but typically at the cost of  architectural complexity: auxiliary subgraph generators, rationale modules, or modified information-bottleneck objectives.

\begin{figure}[ht]
    \centering
    \begin{subfigure}{0.48\textwidth} 
        \includegraphics[width=0.98\textwidth]{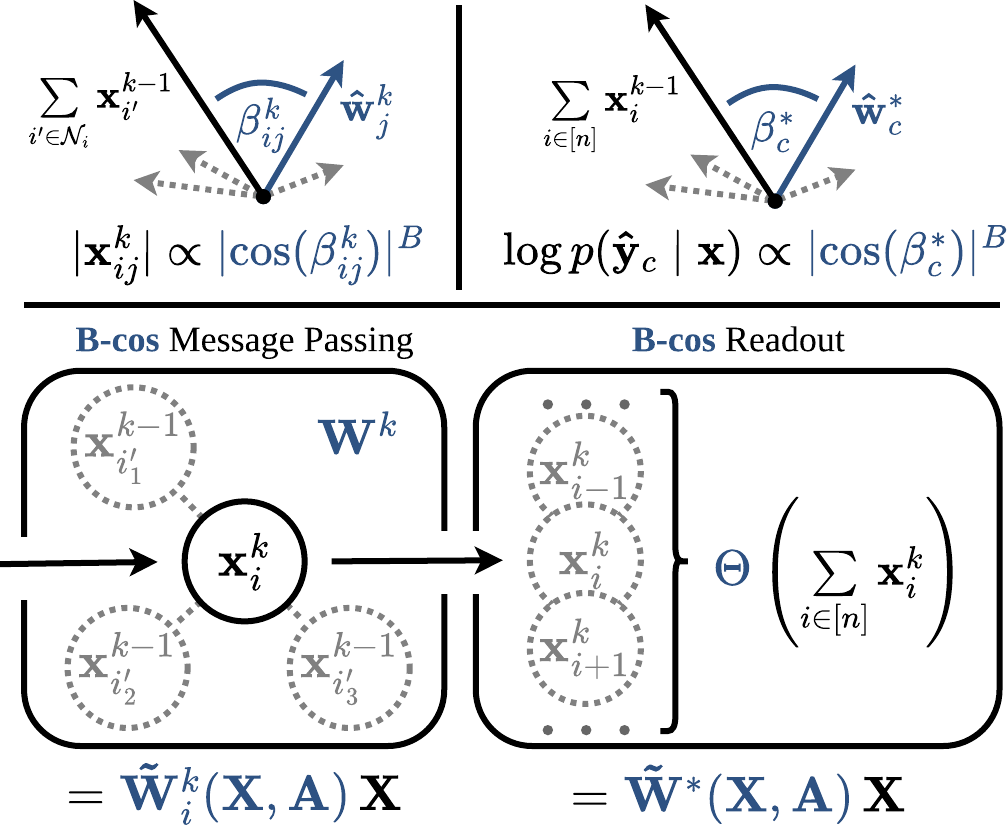} %
        \subcaption{\textbf{B-cos GNNs} use B-cos transforms~\cite{bohle2022b} in place of linear projections in message passing and readout.
        Their node-embeddings and predictions can be summarized by a single input-dependent linear transform.
        They thus facilitate easily accessible and highly interpretable weight-input alignment.}
        \label{subfig:method_figure_1}
    \end{subfigure}
    \hfill
    \begin{subfigure}{0.44\textwidth} 
        \includegraphics[width=0.98\textwidth]{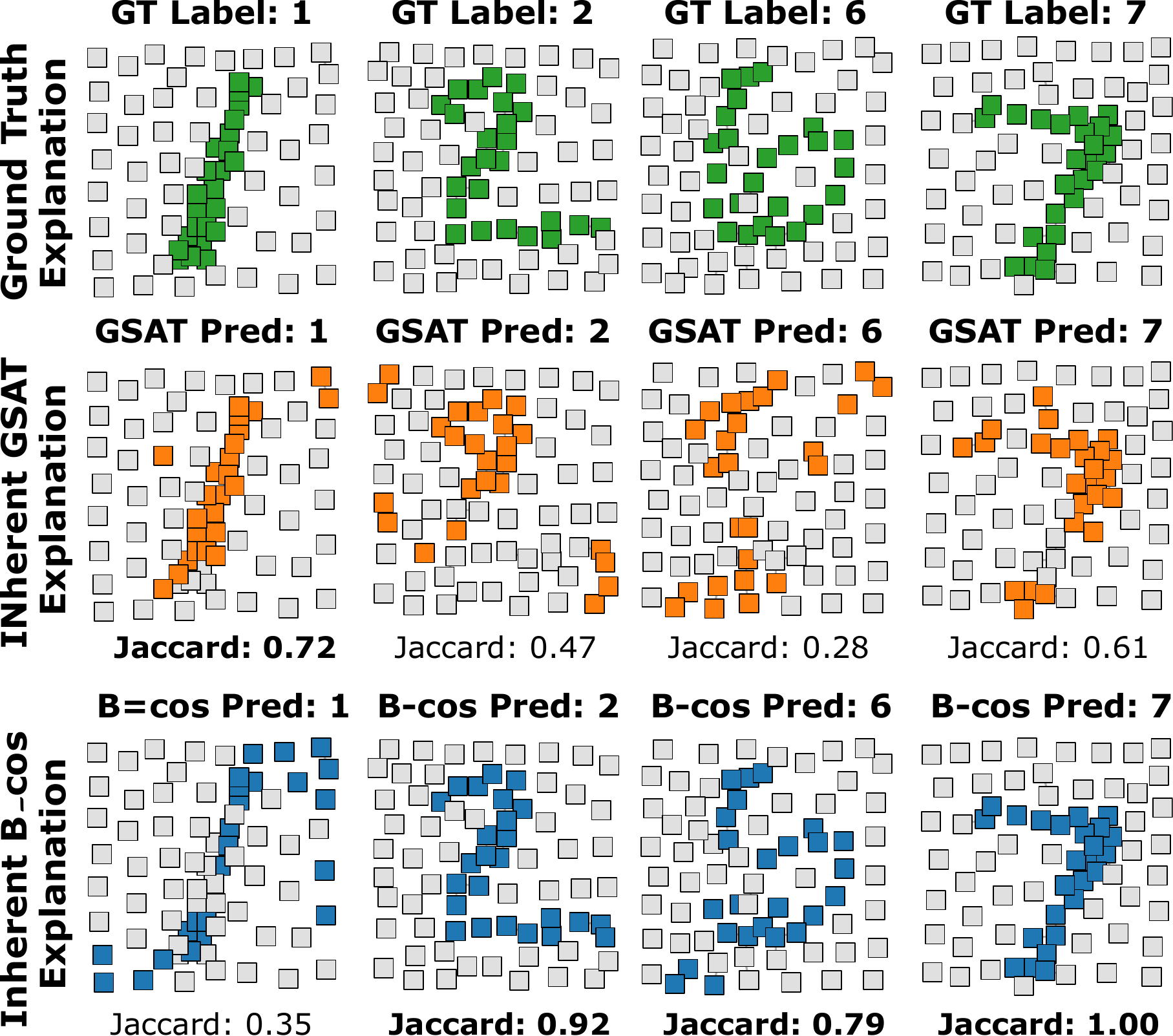} %
        \subcaption{Ground truth, GSAT~\cite{miao2022interpretable} and B-cos GNN \textbf{explanations on MNIST-75sp} (superpixel nodes)~\cite{knyazev2019understanding} examples. Corner contributions originate from the positional $(x,y)$ feature channels. 
        Additional per-class random samples are provided in App.~\ref{app:explanation_examples}).}
        \label{subfig:example_explanations}
    \end{subfigure}
    \caption{Methodological overview and example explanations for our method and the inherently interpretable state-of-the-art baseline GSAT.}
\end{figure}

In this work, we introduce \emph{B-cos GNNs}, a class of inherently explainable GNNs that achieve state-of-the-art explanation accuracy with none of this complexity. Our models leverage B-cos transforms~\cite{bohle2022b} as  replacements for the standard pairing of affine transformations and non-linear activations. B-cos transforms satisfy \emph{dynamic linearity}: their output can be exactly summarized by an input-dependent linear map whose entries quantify weight--input alignment (Sec.~\ref{sec:preliminaries}). We show that this property carries over to GNNs under sum-based aggregation, so that predictions of a B-cos GNN decompose exactly into per-node, per-feature contributions (Fig.~\ref{subfig:method_figure_1}) obtainable from a single forward and backward pass. The sole architectural change is replacing standard fully connected subnetworks with B-cos networks; no auxiliary explainer, no modified training objective, and no perturbation procedure are required. The construction introduces a single hyperparameter $B$ that trades off interpretability against predictive flexibility.

We instantiate our recipe in the Graph Isomorphism Network (GIN) framework~\cite{xu2018powerful}, chosen for its sum-based aggregation and modularity, and extend it to edge-attributed graphs (GINE). Our contributions are:
\begin{itemize}
    \item We identify the design principles under which a GNN inherits dynamic linearity, yielding predictions that decompose exactly into per-node, per-feature contributions in a single backward pass.
    \item We instantiate these principles in the GIN architecture and extend them to edge-attributed variants (GINE).
    \item On synthetic and chemical benchmarks with ground-truth motifs, B-cos GINs achieve state-of-the-art alignment with the true rationale, outperforming both post-hoc explainers (GNNExplainer, IG) and the inherent baseline GSAT, while generating explanations orders of magnitude faster than optimization-based post-hoc methods.
\end{itemize}

\section{Preliminaries}
\label{sec:preliminaries}

\subsection{Graph neural networks}

GNNs operate over attributed graphs, which we define as tuples $\mathbf G = (\mathbf X, \mathbf A)$, where $\mathbf X \in \mathbb R^{n \times \dnode}$ assigns a $\dnode$-dimensional feature vector $\mathbf x_i^{(0)}$ to each node ($i=1,\ldots,n$). For clarity, we develop our analysis for graphs without edge features; the extension to edge-attributed graphs is discussed at the end of \autoref{sec:bcos_gnn}.
The adjacency matrix $\mathbf A \in \mathbb R^{n \times n}$ encodes the graph topology; for unweighted graphs, $\mathbf A_{ij} \in \{0,1\}$ indicates the presence of an edge between nodes $i$ and $j$.

We consider standard GNNs consisting of multiple message-passing layers~\cite{velivckovic2023everything}, each transforming node embeddings in three steps: messages are constructed from neighboring nodes, aggregated according to the topology, and used to update the central node's embedding. Formally, the output of the $k$-th layer at node $i$ is
\begin{align}
    \label{eq:message_passing}
    \mathbf x_i^{(k)} = \Psi^{(k)}\!\left(\mathbf x_i^{(k-1)}, \bigoplus_{j} \mathbf A_{ij}\,\Phi^{(k)}\!\left(\mathbf x_j^{(k-1)}\right)\right),
\end{align}
where $\Phi^{(k)}$ and $\Psi^{(k)}$ are functions (typically neural networks) that construct messages and update node embeddings respectively, and $\bigoplus$ is a permutation-invariant aggregation operator (\eg sum, mean, max). Our analysis in \autoref{sec:bcos_gnn} requires $\bigoplus$ to be summation.

After the final ($L$-th) layer, a readout module $\Theta$ produces the prediction $\mathbf{\hat y}$, either node-wise for node-level tasks ($\mathbf{\hat y} = \Theta(\mathbf x^{(L)}) \in \mathbb R^{n \times c}$) or on a globally pooled representation for graph-level tasks ($\mathbf{\hat y} = \Theta(\sum_{i=1}^n \mathbf x_i^{(L)}) \in \mathbb R^{c}$), where $c$ is task-dependent (\eg $c=1$ for binary classification or regression).

\subsection{B-cos networks}
\label{sec:bcos_trans}
The B-cos transformation~\cite{bohle2022b} is a modified, bias-free linear transformation that promotes weight-input alignment, controlled by a hyperparameter $B \in [1, \infty)$. B-cos networks solve tasks by aligning weights with inputs, and their output can be rewritten as a single input-dependent linear transform whose entries quantify exactly this alignment for the given input. Such networks are competitive in both computer vision~\cite{bohle2022b, bohle2024b} and natural language~\cite{bohle2024b, wang2025b} while exhibiting strong interpretability.

\paragraph{Contrasting linear and B-cos transforms}
For the sake of direct comparison, we state the $j$-th activation of a (zero bias) linear transformation $f_{\mathbf W}$ and B-cos transformation $f_{(\mathbf W, B)}$ side-by-side.
The former is parametrized only by a weight matrix $\mathbf W \in \mathbb R^{q \times p}$ and the latter introduces an additional hyperparameter $B \in [1, \infty)$.
Let $\mathbf x \in \mathbb R^{p}$ be an input vector and let $\mathbf w_j$ denote the $j$-th row of the weight matrix. Then, their $j$-th activations can be expressed as
\begin{align}
    \label{eq:linear}
    f_{\mathbf W}(\mathbf x)_j &= \norm{\mathbf w_j} \norm{\mathbf x} c(\mathbf x, \mathbf w_j), \,\text{and } \\
    \label{eq:bcos}
    f_{(\mathbf W, B)}(\mathbf x)_j &= \norm{\mathbf{\hat{w}}_j} \norm{\mathbf x} \abs{c(\mathbf x, \mathbf{\hat{w}}_j)}^B  \text{sgn}(c(\mathbf x, \mathbf{\hat{w}}_j))
\end{align}
respectively.
The hat-operator scales weight vectors such that $\mathbf{\hat w}_j$ has unit norm and $c(\mathbf x, \mathbf{\hat{w}}_j) \in [-1,1]$ denotes the cosine-similarity (weight-input alignment) between $\mathbf x$ and $\mathbf w_j$.

B-cos activations are upper-bounded by $\norm{\mathbf x}$ (assuming unit-norm weights). Increasing   $B$ leads to a sharper activation landscape where higher activations require higher weight-input alignment. For values of $B > 1$, the B-cos transformation is inherently non-linear, which means the inclusion of additional non-linear activation functions is not strictly necessary to model non-linear data.

\paragraph{Dynamic linearity}
Most importantly, B-cos transforms satisfy \emph{dynamic linearity}.
\begin{definition} 
\label{def:dynlin}
A function $f_\theta: \mathbb R^p \to \mathbb R^q$ with parameters $\theta$ satisfies \emph{dynamic linearity} if it can be rewritten as
$$
f_\theta(\mathbf x) = \mathbf{W}_\theta(\mathbf x) \mathbf x,
$$
such that the \emph{dynamic weights} $\mathbf{W}_\theta(\mathbf x) \in \mathbb R^{q \times p}$ only depend on the input $\mathbf x \in \mathbb R^p$ and parameters $\theta$.
\end{definition}
Let   $c(\mathbf x, \mathbf{\hat W})$ denote the vector of row-wise cosine similarities, i.e. $c(\mathbf x, \mathbf{\hat W})_j=c(\mathbf x, \mathbf{\hat w}_j)$.
For a B-cos transform $f_\theta$ with parameters $\theta = (\mathbf W, B)$,  \cite{bohle2022b} show that
\begin{align}
\label{eq:input_dependent_linear_transform}
f_\theta(\mathbf x) &= \mathbf{\tilde{W}}_\theta(\mathbf x) \mathbf x,
\end{align}
where the B-cos dynamic weights are given by
\begin{align}
\label{eq:input_dependent_weights}
\mathbf{\tilde W}_\theta(\mathbf x) &= \mathrm{diag}\!\big(\abs{c(\mathbf x, \mathbf{\hat W})}^{B - 1}\big)\, \mathbf{\hat W}.
\end{align}
Here, $\mathbf{\hat W}$ is the row-wise unit-norm rescaling of $\mathbf W$, and $\abs{\cdot}^{B-1}$ is applied element-wise to the cosine-similarity vector. Intuitively, $\mathbf{\tilde W}_\theta(\mathbf x)$ is $\mathbf{\hat W}$ with each row $j$ scaled by a factor that vanishes when $\mathbf x$ is poorly aligned with $\hat{\mathbf w}_j$ and approaches one when alignment is high.
Multiple B-cos transforms can be composed into deeper networks. Any composition
$$
g_{(\theta_1, \ldots, \theta_L)}(\mathbf x) = (f_{\theta_L} \circ \ldots \circ f_{\theta_1})(\mathbf x)
$$
of $L$ dynamically linear transforms $f_{\theta_1}, \ldots, f_{\theta_L}$ is itself dynamically linear, with 
weights
\begin{align}
\mathbf W_{(\theta_1, \ldots, \theta_L)}(\mathbf x) = \tilde{\mathbf W}_{\theta_L}(a_L) \cdots \tilde{\mathbf W}_{\theta_1}(a_1), \qquad a_k = (f_{\theta_{k-1}} \circ \cdots \circ f_{\theta_1})(\mathbf x),
\label{eq:finalweights}
\end{align}
where $a_k$ denotes the input to the $k$-th transform, i.e., the output of the first $k-1$ transforms applied to $\mathbf x$ (with $a_1 = \mathbf x$ by convention).
The \emph{contribution map} of the above model is given by
\begin{align}
\label{eq:contribution_map}
\mathbf M(\mathbf x) = \mathbf W_{(\theta_1, \ldots, \theta_L)}(\mathbf x) \odot \mathbf x,
\end{align}
where $\odot$ denotes element-wise multiplication of each row of $\mathbf W_{(\theta_1,\ldots,\theta_L)}(\mathbf x)$ with $\mathbf x$ (broadcasting along rows).
Crucially for interpretability, the output $g_{(\theta_1, \ldots, \theta_L)}(\mathbf x)$ equals the row-sum $\sum_i \mathbf M(\mathbf x)_i$, so that the entry $\mathbf M(\mathbf x)_{i,j}$ exactly quantifies the contribution of the $i$-th input feature to the $j$-th output element.

For B-cos networks, this decomposition carries genuine interpretive content beyond formal exactness. Because each B-cos transform reaches high activations only when its weights align with its inputs (\autoref{eq:bcos})the input-dependent matrix $\mathbf W_{(\theta_1,\ldots,\theta_L)}(\mathbf x)$ encodes the directions in input space that the network has learned to respond to — and the contributions $\mathbf M(\mathbf x)_{i,j}$ inherit this alignment structure. 
By contrast, other piecewise-linear models, such as ReLU networks, admit a similar dynamic-linear rewriting, but their weights at a given input do not, in general, carry an analogous alignment interpretation. We exploit this property when extending B-cos networks to graphs in the next section.

\section{B-cos graph neural networks}
\label{sec:bcos_gnn}

\paragraph{A recipe for inherently interpretable GNNs}
For compact notation, we denote the dimension of node embeddings at layer $k$ by $p_k$ (with $p_0 = \dnode$) and write the embeddings of all $n$ nodes as a flattened vector $\mathbf x^{(k)} \doteq \mathrm{vec}(\mathbf X^{(k)}) \in \mathbb R^{n p_k}$ throughout this section. We write $\mathbf x \doteq \mathbf x^{(0)}$ for the initial node features.

To construct B-cos GNNs whose predictions can be summarized through dynamic linearity (\autoref{def:dynlin}), we restrict the message and update functions $\Phi^{(k)}, \Psi^{(k)}$ in \autoref{eq:message_passing}, the readout $\Theta$, and any global pooling to compositions of B-cos transforms and constant linear operations combined via weighted summation. Each such component is dynamically linear individually, and weighted summation preserves this property (App.~\ref{app:dynlin_proofs}). Consequently, every node embedding and the resulting prediction of our B-cos GNN can be written as
\begin{align}
    \label{eq:layer_dynlin}
    \mathbf x_i^{(k)} = \mathbf W_i^{(k)}(\mathbf X, \mathbf A)\,\mathbf x,
    \qquad
    \mathbf{\hat y} = \mathbf W^*(\mathbf X, \mathbf A)\,\mathbf x,
\end{align}
where the input-dependent matrices $\mathbf W_i^{(k)}$ and $\mathbf W^*$ depend only on $\mathbf x$  and the GNN parameters.

\paragraph{Explanatory subgraphs from contribution maps}
For graph-level predictions, the contribution map (\autoref{eq:contribution_map}) of our B-cos GNN takes the form
\begin{align}
    \label{eq:gnn_contribution_map}
    \mathbf M^*(\mathbf X, \mathbf A) = \mathbf W^*(\mathbf X, \mathbf A) \odot \mathbf x \;\in\; \mathbb R^{c \times np_0},
\end{align}
where row $r$ of $\mathbf M^*$ quantifies the contribution of every initial node feature to the $r$-th output. To explain why the model assigned a particular class $\hat r$ to an input graph, we select the corresponding row of $\mathbf M^*$ and aggregate the contributions per node, $s_i = \sum_j \mathbf M^*_{\hat r, (i,j)}$, where the index $(i,j)$ ranges over the $j$-th feature of the $i$-th node. 
Following the standard formulation~\cite{yuan2022explainability}, we then represent the explanation by the node subset $S \subseteq \{1,\ldots,n\}$ obtained by ranking nodes according to $s_i$ and selecting the top $m$, or by selecting the smallest $S$ whose total contribution exceeds a target fraction of the model response. The induced subgraph is the explanation.

\paragraph{B-cos GINs}
We instantiate the construction from \autoref{sec:bcos_gnn} in the GIN framework~\cite{xu2018powerful}, whose sum-based aggregation matches the requirements above. The standard GIN update is
\begin{align}
\label{eq:bcos_gin_update}
    \mathbf x_i^{(k)} = \mathrm{MLP}^{(k)}\!\left(
        (1 + \epsilon^{(k)})\,\mathbf x_i^{(k-1)} + \sum_{j \in \mathcal N(i)} \mathbf x_j^{(k-1)}
    \right),
\end{align}
where $\epsilon^{(k)} \in \mathbb R$ is a (potentially learnable) scalar. To obtain a B-cos GIN, we replace each MLP with a stack of B-cos transforms, taking on the role of the update module $\Psi^{(k)}$. Since GIN constructs no explicit messages, $\Phi^{(k)}$ is the identity. The remaining components, the affine combination $(1+\epsilon^{(k)}) \mathbf x_i^{(k-1)} + \sum_j \mathbf x_j^{(k-1)}$ and the final summation, are constant linear operations and thus preserve dynamic linearity (Def. \ref{def:dynlin}). The computation of contribution maps leverages~\cite{bohle2022b}.

\paragraph{Extension to edge features}
We treat edge features as additive offsets to neighbor embeddings, replacing $\mathbf x_j^{(k-1)}$ in the GIN update by $\mathbf x_j^{(k-1)} + \mathbf e_{ji}$. This is a constant linear operation and preserves dynamic linearity. We refer to the resulting model as B-cos GINE (App.~\ref{app:gine_architectures}) and use it for our edge-attributed datasets.

\section{Experiments}
We evaluate B-cos GINs both in terms of predictive performance and the quality of their explanations and compare them to conventional GINs explained using GNNExplainer and IG as baseline post-hoc explainers as well as an inherently interpretable GSAT-GIN baseline.
Moreover, we also analyze the sensitivity of our B-cos GINs to the $B$ hyperparameter used to control the weight-input alignment strength. Throughout our empirical evaluation, we adhere to a consistent experimental protocol. 

All datasets are partitioned into training, validation, and test sets using a stratified 70\%/20\%/10\% split, with two exceptions: for MNIST-75sp we use 20{,}000 training, 5{,}000 validation, and 1{,}000 test graphs to reduce training and inference time given the larger graph sizes and Di-Halo-Benzene dataset where we used 80\%/10\%/10\%. 
We implement model training and evaluation logic using the PyTorch~\cite{paszke2019pytorch}, PyTorch geometric~\cite{paszke2019pytorch} and B-cos v2~\cite{BcosV2} packages.
To ensure statistical reliability, every experiment is executed across five independent random seeds, with all metrics reported as the mean alongside the standard deviation. 
Experiments were conducted on a single machine equipped with a AMD EPYC 7662 64-Core Processor and NVIDIA A100-PCIE-40GB GPU.
We do not release code but provide a reproducibility statement in \autoref{app:reproducibility}.

\subsection{Datasets}
\label{sec:datasets}
Recent work has shown that many standard graph-XAI benchmarks admit trivial solutions, weak predictors, or redundant rationales~\cite{faber2021comparing,agarwal2023evaluating,coupette2025no}, and that conclusions drawn from a small set of synthetic datasets often do not transfer to real-world graphs~\cite{amara2022graphframex,amara2023ginxeval}. 
We therefore select benchmarks along three axes: (i) coverage of four 
distinct graph domains (chemistry, vision-as-graph, stochastic block 
models, synthetic motifs), with orthogonal variation in task 
granularity, edge attributes, and rationale source (Tab.~\ref{tab:datasets_overview}); 
(ii) availability of verified ground-truth rationales for explainability 
evaluation; and (iii) compliance with the RINGS framework~\cite{coupette2025no}, which filters out datasets solvable without genuine structural reasoning by enforcing \emph{structural necessity} (P1) and \emph{feature--topology interdependence} (P2). \autoref{tab:datasets_overview} summarises the resulting set; full statistics, splits, and construction protocols are deferred to App.~\ref{app:datasets}.

\begin{table}[t]
\caption{Benchmark datasets and their roles in our evaluation. \emph{Predictive} indicates use for accuracy/F1 evaluation; \emph{Explainability} indicates availability of node-level ground-truth rationales for Jaccard@$k$ and AUROC. RINGS columns mark compliance with structural necessity (P1) and feature--topology interdependence (P2)~\cite{coupette2025no}. Edge feat.\ indicates whether the dataset carries discrete bond/edge features.}
\label{tab:datasets_overview}
\centering
\small
\begin{tabular}{lcccccc}
\toprule
\textbf{Dataset} & \textbf{Domain} & \textbf{Task} & \textbf{Edge feat.} & \textbf{Pred.} & \textbf{Expl.} & \textbf{RINGS} \\
\midrule
PATTERN     & SBM         & Node class.  & --   & \checkmark & --         & \checkmark / \checkmark \\
NCI1        & Chemistry   & Graph class. & --   & \checkmark & --         & \checkmark / \checkmark \\
MolHIV      & Chemistry   & Graph class. & \checkmark & \checkmark & --   & \checkmark / \checkmark \\
\midrule
BA-2Motif   & Synthetic   & Graph class. & --   & \checkmark & \checkmark & \checkmark / \checkmark \\
Di-Halo-Benzene & Chemistry & Graph class. & \checkmark & \checkmark & \checkmark & \checkmark / \checkmark \\
MNIST-75sp  & Vision      & Graph class. & --   & \checkmark & \checkmark & \checkmark / \checkmark \\
\bottomrule
\end{tabular}
\end{table}

\paragraph{Predictive-only benchmarks.}
\textbf{PATTERN}~\cite{dwivedi2020benchmarking} is a node classification task on stochastic block models that probes a model's ability to discern local topological structure in dense graphs.
\textbf{NCI1}~\cite{wale2008comparison} comprises 4{,}110 chemical compounds screened for activity against non-small-cell lung cancer.
\textbf{OGB-MolHIV}~\cite{wu2018moleculenet,hu2020open} extends the chemistry setting to a larger and more diverse molecular collection (HIV inhibition).
None of these datasets provides verified node-level rationales, so we use them solely to verify that B-cos GNNs retain competitive predictive capacity on standard tasks.

\paragraph{Benchmarks with verified rationales.}
\textbf{BA-2Motif}~\cite{ying2019gnnexplainer,luo2020parameterized} attaches one of two five-node motifs (House or Cycle) to a random Barab\'asi--Albert graph; the motif nodes form the exact rationale. We use a regenerated variant with strict node-level masks (App.~\ref{app:datasets}) and, given known instabilities of this benchmark~\cite{faber2021comparing,reproducing-pgexplainer}, complement it with the chemically and visually grounded rationales below.
\textbf{MNIST-75sp}~\cite{knyazev2019understanding} represents handwritten digits as superpixel graphs, with non-zero superpixels constituting the rationale. Following B-cos vision conventions~\cite{bohle2022b}, we augment the standard node features (normalized $(x,y)$ position, greyscale intensity $s$) with $1-s$ so that dark and bright intensities map to distinct input directions. We further sparsify the graphs by removing edges between superpixels whose normalized centers are farther than $\tau{=}0.1$ apart, restricting message passing to local neighborhoods (details in App.~\ref{app:datasets}).

\textbf{Di-Halo-Benzene-Isomer} is a benchmark we construct from ZINC 
scaffolds onto which a di-halo-benzene ring is grafted. The 9-class 
label combines halogen identity (Cl, F, Br) and ring position 
(ortho, meta, para), with an 8-node rationale (six aromatic carbons 
plus two halogens). Halogen identity is encoded in node features 
alone (3 classes), and positional information in topology alone 
(3 classes); only their joint use recovers all 9 classes, ensuring 
genuine structure–feature interdependence by construction.

\subsection{Baselines}
\label{sec:baselines}

We compare against three baselines spanning the main families of GNN 
explainability, i.e., an optimization-based post-hoc method, a gradient-based 
post-hoc method, and an inherently interpretable model.

\paragraph{Conventional GIN with post-hoc explainers.}
We train a GIN with conventional ReLU-MLPs of identical depth, width, 
and readout structure. Differences in trainable parameter count arise 
only from the bias-free formulation of B-cos transforms and the fixed 
$\epsilon = 0$ in B-cos GIN, both inherent to the construction. This isolates the effect of replacing ReLU-MLPs with B-cos 
transforms on both predictive and explanation quality. We pair this 
baseline with two post-hoc explainers: 
\textbf{GNNExplainer}~\cite{ying2019gnnexplainer}, which optimizes 
a soft edge mask via mutual information maximization, and 
\textbf{Integrated Gradients (IG)}~\cite{sundararajan2017axiomatic}, 
which accumulates gradients along an interpolation path from a 
zero-feature baseline. IG is the most direct gradient-based analogue 
to B-cos contribution maps and serves as our primary attribution 
baseline.

\paragraph{Inherently interpretable: GSAT.}
\textbf{GSAT}~\cite{miao2022interpretable} jointly trains a GNN with 
a stochastic attention mechanism under a graph information-bottleneck 
objective. 
We retrain GSAT with the same GIN/GINE backbones used 
throughout this paper, using the authors' reference implementation 
and default subgraph-extractor hyperparameters (App.~\ref{app:architectures}, 
Tabs.~\ref{tab:gsat_gin_compact},~\ref{tab:gsat_gine_summary}). 
We were unable to reproduce the predictive performance reported by 
\cite{miao2022interpretable} on all datasets under our shared GIN/GINE
backbone configuration. The reported numbers are from our retrained 
models for fair comparison.

\subsection{Metrics}
\label{sec:metrics}

\paragraph{Predictive performance.}
We report classification F1-score (and accuracy where standard for the 
dataset) over five seeds; per-dataset metric choices follow the 
benchmark conventions (Tab.~\ref{tab:datasets_overview}).

\paragraph{Explanation quality.}
For datasets with verified node-level ground-truth rationales 
$G^* \subseteq V$, we evaluate explanations as the alignment between 
$G^*$ and a node-importance scoring function $s: V \to \mathbb{R}$ 
produced by the explainer. 
We evaluate at the node level because B-cos contribution maps decompose the prediction into per-node contributions by construction and node-level evaluation matches this granularity directly. 
Also, we evaluate at fixed cardinality $k = |G^*|$, which makes the metric directly comparable across explainers and datasets. No threshold needs to be chosen, and the score reaches $1$ exactly 
when the predicted and ground-truth sets coincide.

\paragraph{Jaccard@$k$ (primary metric).}
Let $S_k \subseteq V$ denote the top-$k$ nodes under $s$. We define
\begin{equation}
    \label{eq:jaccard}
    \text{Jaccard@}k = \frac{|S_k \cap G^*|}{|S_k \cup G^*|}, 
    \qquad k = |G^*|.
\end{equation}
With $k = |G^*|$, Jaccard@$k$ penalises both false positives 
(spurious nodes) and false negatives (missing motif nodes), and 
reaches $1$ if and only if $S_k = G^*$.  

\paragraph{Node AUROC (secondary metric).}
For comparability with prior work, we additionally report a 
node-level AUROC:
\begin{equation}
    \label{eq:auroc}
    \text{AUROC} = \frac{1}{|G^*|\cdot|V \setminus G^*|} 
    \sum_{u \in G^*} \sum_{v \in V \setminus G^*} \mathbf{1}[s_u > s_v].
\end{equation}
A high AUROC indicates that the scorer ranks motif nodes above non-motif nodes on average, but does not require the top-$k$ nodes to coincide with the motif. We therefore treat Jaccard@$k$ as the more discriminating measure and AUROC as a comparability metric with prior work.

\paragraph{Limitations of structural metrics.}
Both metrics measure \emph{plausibility}, i.e. agreement with a known rationale, rather than \emph{faithfulness} to the model's computation. 
For B-cos GNNs, faithfulness is established by construction: predictions 
decompose exactly into the contributions reported as explanations 
(\autoref{sec:bcos_gnn}). For all baselines, faithfulness is not 
guaranteed, and recent   evaluations indicate that several popular post-hoc explainers perform no better than random 
edge ranking despite strong AUROC scores~\cite{amara2023ginxeval}. 
We adopt these standard ground-truth-based metrics for direct 
comparability with prior work and discuss possibilities for explicit 
faithfulness evaluation in \autoref{sec:conclusion}.

\subsection{Predictive performance}
\label{sec:predictive_perf}

Inherent interpretability typically incurs some cost in predictive 
accuracy. We quantify this cost on a diverse set of benchmarks 
(\autoref{sec:datasets}) by comparing B-cos GINs/GINEs against three 
references: conventional GIN/GINE backbones, the inherently interpretable 
GSAT-GIN/GINE baseline, and, where available, results published with 
the original benchmark.

\begin{table}[!htb]
\caption{
Predictive performance for explainability benchmarks (top 3 rows) and standard benchmarks (bottom 3 rows). Performance is measured in terms of F1-score.
B-cos GINs exhibit a small loss in predictive accuracy compared to conventional GINs but remain competitive.
}
\label{tab:pred_perf}
\vskip 0.15in
\begin{center}
\begin{small}
\begin{sc}
\begin{tabular}{llccc}
\toprule
& & \multicolumn{3}{c}{\textbf{F1}} \\
\cmidrule(lr){3-5}
\textbf{Dataset} & \textbf{Task} & \textbf{GIN(E)} & \textbf{B-cos GIN(E) ($B=2$)} & \textbf{GSAT-GIN(E)} \\
\midrule
BA-2Motif    & Graph Class. & \textbf{1.000 $\pm$ 0.00} & \underline{1.000 $\pm$ 0.00} & \underline{1.000 $\pm$ 0.00} \\
Di-Halo      & Graph Class. & \textbf{1.000 $\pm$ 0.00} & \underline{1.000 $\pm$ 0.00} & 0.97 $\pm$ 0.05 \\
MNIST-75SP   & Graph Class. & {0.86 $\pm$ 0.01} & \textbf{0.93 $\pm$ 0.01} & \underline{0.89 $\pm$ 0.01} \\
\midrule
PATTERN      & Node Class. & \textbf{0.69 $\pm$ 0.01} & \textbf{0.69 $\pm$ 0.02} & 0.43 $\pm$ 0.04 \\
NCI1         & Graph Class. & \textbf{0.80 $\pm$ 0.02} & 0.72 $\pm$ 0.01 & \underline{0.78 $\pm$ 0.02} \\
OGB-MolHIV  & Graph Class. & \textbf{0.79 $\pm$ 0.01} & 0.73 $\pm$ 0.03 & \underline{0.75 $\pm$ 0.02} \\
\bottomrule
\end{tabular}
\end{sc}
\end{small}
\end{center}
\vskip -0.1in
\end{table}

\autoref{tab:pred_perf} reports test F1 scores across five seeds for the B-cos model with alignment pressure $B=2$, identified as the most balanced configuration by our sensitivity analysis (\autoref{sub_sec:ablation_b}). On the two synthetic benchmarks (BA-2Motif, Di-Halo-Benzene) all models achieve near-perfect accuracy, in line with the well-defined nature of these tasks. Differences emerge on the remaining datasets. On PATTERN, B-cos GIN ties with the conventional GIN. On NCI1 and OGB-MolHIV, the conventional GIN/GINE leads, with B-cos and GSAT trailing by similar margins. On MNIST-75sp, by contrast, the B-cos GIN achieves a substantially higher average F1-score than both baselines.

Across all benchmarks, enforcing alignment incurs moderate (single-digit F1 points) predictive cost on two datasets and clearly improves performance on MNIST-75sp. This is mostly consistent with the trade-off 
documented for B-cos networks in vision and language~\cite{bohle2022b,
bohle2024b,wang2025b} and supports the view that alignment-enforcing 
architectures retain enough capacity to be competitive on tasks 
requiring genuine graph reasoning.

\subsection{Explainability performance}
\begin{table*}[ht!]
\caption{Detailed evaluation of Explanation Accuracy and computational cost (Mean $\pm$ Std over 5 seeds). \textbf{ms/graph} denotes the average inference time per explanation. B-cos achieves the best structural alignment on all three datasets.}
\label{tab:timing_and_accuracy}
\vskip 0.05in
\begin{center}
\begin{small}
\begin{sc}

\textbf{BA-2Motif (Synthetic); Size of test set = 100 Graphs} \\
\vskip 0.05in
\begin{tabular}{lccccc}
\toprule
\textbf{Method} & \textbf{Jacc.} $\uparrow$ & \textbf{AUC} $\uparrow$ & \textbf{ms/graph} & \textbf{Acc} & \textbf{F1} \\
\midrule
GNNExplainer & \underline{0.57}{\scriptsize$\pm$0.14} & 0.82{\scriptsize$\pm$0.09} & 290.92{\scriptsize$\pm$6.42} & 1.00 $\pm$0.00 & 1.00 $\pm$0.00\\
IG & \underline{0.54}{\scriptsize$\pm$0.12} & \underline{0.89}{\scriptsize$\pm$0.02} & 61.35{\scriptsize$\pm$0.91} & 1.00 $\pm$0.00 & 1.00 $\pm$0.00\\
GSAT & \underline{0.57}{\scriptsize$\pm$0.03} & 0.80{\scriptsize$\pm$0.03} & \textbf{0.44}{\scriptsize$\pm$0.02} & 1.00 $\pm$0.00 & 1.00 $\pm$0.00\\
\textbf{B-cos (Ours)} & \textbf{0.84}{\scriptsize$\pm$0.02} & \textbf{0.96}{\scriptsize$\pm$0.03} & \underline{0.62}{\scriptsize$\pm$0.04} & 1.00 $\pm$0.00 & 1.00 $\pm$0.00\\
\bottomrule
\end{tabular}
\vskip 0.1in
\textbf{Di-Halo-Benzene (Chemical Structure); Size of test set = 900 Graphs} \\
\vskip 0.05in
\begin{tabular}{lccccc}
\toprule
\textbf{Method} & \textbf{Jacc.} $\uparrow$ & \textbf{AUC} $\uparrow$ & \textbf{ms/graph} & \textbf{Acc} & \textbf{F1} \\
\midrule
GNNExplainer & 0.36{\scriptsize$\pm$0.11} & 0.69{\scriptsize$\pm$0.12} & 316.06{\scriptsize$\pm$67.2} & \textbf{1.00} $\pm$0.00 & \textbf{1.00} $\pm$0.00\\
IG & \underline{0.83}{\scriptsize$\pm$0.07} & \underline{0.98}{\scriptsize$\pm$0.01} & 70.51{\scriptsize$\pm$0.67} & \underline{0.97} $\pm$0.02 & \underline{0.97} $\pm$0.07\\
GSAT & 0.48{\scriptsize$\pm$0.00} & 0.86{\scriptsize$\pm$0.02} & \textbf{0.44}{\scriptsize$\pm$0.04} & \underline{0.97} $\pm$0.04 & \underline{0.97} $\pm$0.05\\
\textbf{B-cos (Ours)} & \textbf{0.96}{\scriptsize$\pm$0.01} & \textbf{0.99}{\scriptsize$\pm$0.01} & \underline{1.17}{\scriptsize$\pm$0.11} & \textbf{1.00} $\pm$0.00 & \textbf{1.00} $\pm$0.00\\
\bottomrule
\end{tabular}
\vskip 0.1in
\textbf{MNIST-75sp (Image Superpixels); Size of test set = 1000 Graphs} \\
\vskip 0.05in
\begin{tabular}{lccccc}
\toprule
\textbf{Method} & \textbf{Jacc.} $\uparrow$ & \textbf{AUC} $\uparrow$ & \textbf{ms/graph} & \textbf{Acc} & \textbf{F1} \\
\midrule
GNNExplainer & 0.34$\pm$0.03 & 0.64$\pm$0.05 & 327.03$\pm$6.53 &0.90$\pm$0.00 &0.89$\pm$0.04 \\
IG & \underline{0.63}$\pm$0.03 & \underline{0.92}$\pm$0.01 & 77.93$\pm$5.00 & \underline{0.90}$\pm$0.01 & \underline{0.90}$\pm$0.01\\
GSAT & 0.61$\pm$0.06 & 0.86$\pm$0.02 & \textbf{0.72}$\pm$0.05 &0.89$\pm$0.01&0.89$\pm$0.01 \\
\textbf{B-cos (Ours)} & \textbf{0.91}$\pm$0.01 & \textbf{0.99}$\pm$0.00 & \underline{0.94}$\pm$0.07 & \textbf{0.93}$\pm$0.01 & \textbf{0.93}$\pm$0.091\\
\bottomrule
\end{tabular}

\end{sc}
\end{small}
\end{center}
\end{table*}

\vskip -0.1in

Table~\ref{tab:timing_and_accuracy} reports explanation quality and average time taken to generate explanations across the three benchmarks with verified rationales. Predictive performances are re-reported in the F1-score column for convenient comparison.
The inherently explainable methods are both substantially faster than the post-hoc explainers, as they require either a single forward (GSAT) or forward and backward pass (B-cos). This discrepancy also explains why our method is marginally slower, than GSAT.
In terms of explanation quality, B-cos GINs achieve the highest Jaccard@k on all datasets ($0.84$ on BA-2Motif, $0.96$ on Di-Halo-Benzene, and $0.91$ on MNIST-75sp) with corresponding AUROC gains, demonstrating consistently superior structural alignment with ground-truth rationales.
This superiority on MNIST-75P is also visual upon qualitative comparison of explanations (see Figs. \ref{subfig:example_explanations}, \ref{fig:bcos_gsat_comp}).
Among the baselines, IG is the strongest with Jaccard@k of $0.83$ and $0.63$ on Di-Halo-Benzene and MNIST-75sp, where GNNExplainer and GSAT both fall short.
Overall, the weight-input alignment enforced by B-cos GINs demonstrates excellent suitability for identifying meaningful ground-truth rationales. 
This transparency benefit is practically free at inference time and may outweigh the modest predictive cost in applications with high interpretability requirements.

\subsection{Sensitivity to alignment pressure $B$}
\begin{figure}[ht]
    \centering
    \begin{subfigure}[t]{0.495\textwidth}
        \label{subfig:b_predictive_performance}
        \centering
        \begin{tikzpicture}
            \begin{axis}[
                title={\textbf{Predictive Performance}},
                xlabel={$B$},
                ylabel={Score},
                xmin=0.75, xmax=3.25,
                ymin=0.33, ymax=1.05,
                xtick={1.0, 1.5, 2.0, 2.5, 3.0},
                ytick={0.4, 0.6, 0.8, 1.0},
                legend pos=south east,
                legend style={font=\small},
                grid=major,
                grid style={dashed, gray!40},
                width=\linewidth,
            ]
            
            \addplot[
                mark=*,
                solid,
                thick,
            ] coordinates {
                (1.0, 0.726)
                (1.5, 1.000)
                (2.0, 1.000)
                (2.5, 1.000)
                (3.0, 1.000)
            };
            \addlegendentry{BA-2motif}
            
            \addplot[
                mark=*,
                dashed,
                thick,
            ] coordinates {
                (1.0, 1.000)
                (1.5, 1.000)
                (2.0, 1.000)
                (2.5, 1.000)
                (3.0, 1.000)
            };
            \addlegendentry{Di-Halo-Benzene}

            \addplot[
                mark=*,
                dashdotted,
                thick,
                error bars/.cd,
                error bar style={solid},
                y dir=both, y explicit,
            ] coordinates {
                (1.0, 0.752) +- (0, 0.030)
                (1.5, 0.781) +- (0, 0.017)
                (2.0, 0.789) +- (0, 0.029)
                (2.5, 0.782) +- (0, 0.025)
                (3.0, 0.770) +- (0, 0.005)
            };
            \addlegendentry{MolHIV (AUROC)}

            \addplot[
                mark=*,
                dotted,
                thick,
                error bars/.cd,
                error bar style={solid},
                y dir=both, y explicit,
            ] coordinates {
                (1.0, 0.580) +- (0, 0.026)
                (1.5, 0.684) +- (0, 0.013)
                (2.0, 0.703) +- (0, 0.015)
                (2.5, 0.687) +- (0, 0.027)
                (3.0, 0.694) +- (0, 0.024)
            };
            \addlegendentry{NCI1 (F1-score)}
 
            \end{axis}
        \end{tikzpicture}
    \end{subfigure}
    \hfill
    \begin{subfigure}[t]{0.495\textwidth}
        \label{subfig:b_explanation_auc}
        \centering
        \begin{tikzpicture}
            \begin{axis}[
                title={\textbf{Explanation Accuracy (AUC)}},
                xlabel={$B$},
                ylabel={Score},
                xmin=0.75, xmax=3.25,
                ymin=0.3, ymax=1.05,
                xtick={1.0, 1.5, 2.0, 2.5, 3.0},
                ytick={0.4, 0.6, 0.8, 1.0},
                legend pos=south east,
                legend style={font=\small},
                grid=major,
                grid style={dashed, gray!30},
                width=\linewidth,
            ]
            \addplot[
                mark=*,
                solid,
                thick,
                error bars/.cd,
                error bar style={solid},
                y dir=both, y explicit,
            ] coordinates {
                (1.0, 0.47) +- (0, 0.00)
                (1.5, 0.91) +- (0, 0.02)
                (2.0, 0.94) +- (0, 0.05)
                (2.5, 0.98) +- (0, 0.02)
                (3.0, 0.99) +- (0, 0.00)
            };
            \addlegendentry{BA-2motif}
 
            \addplot[
                mark=*,
                dashed,
                thick,
                error bars/.cd,
                error bar style={solid},
                y dir=both, y explicit,
            ] coordinates {
                (1.0, 0.91) +- (0, 0.03)
                (1.5, 0.99) +- (0, 0.01)
                (2.0, 0.97) +- (0, 0.05)
                (2.5, 0.99) +- (0, 0.01)
                (3.0, 0.98) +- (0, 0.01)
            };
            \addlegendentry{Di-Halo-Benzene}
 
            \end{axis}
        \end{tikzpicture}
    \end{subfigure}

    \caption{
        Prediction Accuracy and Explanation AUC of a B-cos GIN model as a function of $B$.
        Errorbars indicate standard deviation over five independent runs with different random seeds.
    }
    \label{fig:b_sensitivity}
\end{figure}

\label{sub_sec:ablation_b}
We analyze the effect of varying the hyperparameter $B$, which, when increased, promotes stronger weight-input alignment. 
This is known to trade off potentially worse accuracy for higher interpretability \cite{bohle2022b}.
To investigate this relation for graph data and models, we train a B-cos GIN on several of our benchmark datasets and plot validation accuracy and explanation AUC as a function of $B \in \{1.0, 1.5, 2.0, 2.5, 3.0\}$ in \autoref{fig:b_sensitivity}.

The node update functions in a B-cos GIN with $B=1.0$ are effectively linear, so the corresponding subpar model predictive performances on three out of four datasets can be attributed to lower expressivity.
Endowing the models with non-linear capabilities by increasing alignment pressure towards $B = 2.0$ steadily improves performance, \eg, $0.581 \to 0.703$ F1-score on NCI1 and $0.752 \to 0.791$ AUROC on MolHIV, whereas further increases slightly decrease performance on these real-world datasets.
The explanation AUC on synthetic benchmarks with ground truth explanations increases steadily, reaching values of around $95\%$ for $B = 2.0$ and approaches $100\%$ for $B = 3.0$, which demonstrates that enforcing higher weight-input alignment can guide models toward better capturing the true underlying discriminatory motifs in the data.

\paragraph{Limitations.}
Our analysis is restricted to sum-aggregation backbones, since dynamic 
linearity requires linear aggregation combined with B-cos-only message 
and update functions.
It is important to note that models relying on weight-input alignment are suitable for classification tasks.
Exploring how to extend them to graph regression tasks and usage of mean, or attention-based aggregators is left to future work. 
Furthermore, our edge-feature extension treats edge attributes as additive biases on messages, so the resulting contribution maps attribute predictions to nodes and node features but not to edge attributes; deriving complementary edge-level contributions is left to future work.
Enforcing weight--input alignment incurs a moderate predictive cost on real-world 
benchmarks (\autoref{sec:predictive_perf}), consistent with the trade-off 
documented for B-cos networks in vision and 
language~\cite{bohle2022b,bohle2024b,wang2025b}. 
Finally, we could not reproduce the predictive performance reported for GSAT 
in~\cite{miao2022interpretable} under our shared backbone configuration; comparisons in this paper therefore rely on our retrained models.

\section{Conclusion}
\label{sec:conclusion}
We showed that B-cos transforms transfer to graphs essentially without much modification: composing them with sum-based aggregation preserves the alignment between weights and inputs that makes B-cos contribution maps interpretable in the first place, so the resulting GNN admits faithful per-node, per-feature contribution maps from a single backward pass. The construction is deliberately minimal (no auxiliary explainer, no modified objective, etc.) and introduces a single hyperparameter $B$ trading off alignment strength against predictive flexibility.
Our experiments show that B-cos GINs and GINEs outperform both post-hoc explainers (GNNExplainer, IG) and an inherent baseline (GSAT) in structural alignment with ground-truth motifs, generate explanations orders of magnitude faster than post-hoc methods, and incur only a modest accuracy cost on graph classification datasets. Our analysis covers sum aggregation. Extending the alignment argument 
to mean, max, and attention-based aggregators, deriving edge-level 
contributions complementary to our per-node decomposition, complementing 
ground-truth-based metrics with in-distribution faithfulness 
evaluation~\cite{amara2023ginxeval}, and applying B-cos GNNs to larger 
real-world graphs where ground-truth rationales are typically unavailable are the natural next steps.
Broader impact of our work is discussed in \autoref{app:impact}.

\bibliographystyle{abbrv}
\bibliography{bibliography}

\newpage
\appendix
\section{Broader Impact}
\label{app:impact}
We facilitate improved interpretability for GNNs, which can positively impact application outcomes through increased transparency and data understanding beyond pure prediction. 
At the same time, interpretable models may be more susceptible to adversarial attacks (reverse engineering) and high quality explanations may provide a false sense of security even when predicitve accuracy is stil insufficient for safe application.
\section{Sum-aggregation GNNs preserve dynamic linearity}
\label{app:dynlin_proofs}

\begin{lemma}
\label{lemma1}
Let $f_{\theta_1}: \mathbb{R}^q \to \mathbb{R}^r$ and $g_{\theta_2}: \mathbb{R}^p \to \mathbb{R}^q$
be functions that satisfy dynamic linearity, let $\alpha_1, \ldots, \alpha_n \in \mathbb{R}$ be
scalar-valued, and $\mathbf{x}_1, \ldots, \mathbf{x}_n \in \mathbb{R}^p$ be vector-valued inputs.
Then
\[
  h_{(\theta_1,\theta_2)}(\tilde{\mathbf x}, \alpha)
  \;\doteq\;
  f_{\theta_1}\!\left(\sum_{j=1}^n \alpha_j\, g_{\theta_2}(\mathbf{x}_j)\right)
\]
also satisfies dynamic linearity with respect to the stacked input $\tilde{\mathbf{x}} = [\mathbf{x}_1^\top, \dots, \mathbf{x}_n^\top]^\top \in \mathbb R^{np}$ and input scalars $\alpha$.
\end{lemma}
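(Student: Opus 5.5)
We unfold Definition~\ref{def:dynlin} for both $g_{\theta_2}$ and $f_{\theta_1}$ and assemble the dynamic weights explicitly; the proof is a direct construction.

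\textbf{Inner functions.} By dynamic linearity of $g_{\theta_2}$, for each $j$ we write $g_{\theta_2}(\mathbf x_j) = \mathbf W_{\theta_2}(\mathbf x_j)\,\mathbf x_j$, with $\mathbf W_{\theta_2}(\mathbf x_j)\in\mathbb R^{q\times p}$ depending only on $\mathbf x_j$ and $\theta_2$.

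\textbf{Weighted sum.} The aggregated vector is
\[
\mathbf z \doteq \sum_{j=1}^n \alpha_j\, g_{\theta_2}(\mathbf x_j) = \mathbf B(\tilde{\mathbf x},\alpha)\,\tilde{\mathbf x},
\qquad
\mathbf B(\tilde{\mathbf x},\alpha) \doteq \big[\alpha_1 \mathbf W_{\theta_2}(\mathbf x_1),\;\ldots,\;\alpha_n \mathbf W_{\theta_2}(\mathbf x_n)\big]\in\mathbb R^{q\times np}.
\]
This is a block-row matrix acting on the stacked input. Its entries depend only on $\tilde{\mathbf x}$, $\alpha$ and $\theta_2$. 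So the inner map is dynamically linear in $\tilde{\mathbf x}$, with the scalars $\alpha$ absorbed into the dynamic weights. The adjacency entries $\mathbf A_{ij}$ and the $(1+\epsilon)$ coefficient play exactly this role in the GNN application.

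\textbf{Outer function.} Dynamic linearity of $f_{\theta_1}$ gives $f_{\theta_1}(\mathbf z)=\mathbf W_{\theta_1}(\mathbf z)\,\mathbf z$, and hence
\[
h_{(\theta_1,\theta_2)}(\tilde{\mathbf x},\alpha) = \underbrace{\mathbf W_{\theta_1}\big(\mathbf B(\tilde{\mathbf x},\alpha)\tilde{\mathbf x}\big)\,\mathbf B(\tilde{\mathbf x},\alpha)}_{\in\,\mathbb R^{r\times np}}\;\tilde{\mathbf x}.
\]
This mirrors the composition rule in \autoref{eq:finalweights}. The composite weight matrix depends only on $(\tilde{\mathbf x},\alpha)$ and $(\theta_1,\theta_2)$, which is what the definition requires.

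\textbf{Main subtlety.} The only delicate point is what "input" means in Definition~\ref{def:dynlin}. The statement treats $\alpha$ as an input, but $h$ is not linear in the joint vector $(\tilde{\mathbf x},\alpha)$; for example, it is bilinear when $f$ and $g$ are linear. We therefore interpret the claim as dynamic linearity in $\tilde{\mathbf x}$: the weights may depend on $\alpha$, just as they may depend on $\tilde{\mathbf x}$ itself, and $\alpha$ is not among the features being attributed.

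If the lemma is instead read with $\alpha$ fixed as parameters (e.g.\ the graph topology), the same computation goes through unchanged, with $\alpha$ moved into $\theta$. Either way no analytic estimate is needed; it suffices to check the dimensions and dependencies. Iterating the lemma over layers then yields \autoref{eq:layer_dynlin}.
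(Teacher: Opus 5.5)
Your proposal is correct and follows the paper's proof: you form the same block-row matrix $\bigl[\alpha_1 \mathbf W_{\theta_2}(\mathbf x_1)\;\cdots\;\alpha_n \mathbf W_{\theta_2}(\mathbf x_n)\bigr]$ and premultiply it by $\mathbf W_{\theta_1}(\mathbf s)$, evaluated at the aggregated vector. Your reading of $\alpha$ is consistent with the paper, which likewise states that the composite weights depend on $\tilde{\mathbf x}$, $\theta_1$, $\theta_2$ and the scalars $\alpha$: the weights may depend on $\alpha$, but $\alpha$ is not part of the linear input.
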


\begin{proof}
Because $g_{\theta_2}$ satisfies dynamic linearity, for each $j$ we have
$g_{\theta_2}(\mathbf{x}_j) = \mathbf{W}_{\theta_2}(\mathbf{x}_j)\,\mathbf{x}_j$.
Scaling by $\alpha_j$ preserves this form since
$\alpha_j\,g_{\theta_2}(\mathbf{x}_j) = (\alpha_j\,\mathbf{W}_{\theta_2}(\mathbf{x}_j))\,\mathbf{x}_j$.
Summing over $j = 1, \ldots, n$ yields
\begin{equation}
  \label{eq:sum_s}
  \mathbf{s} \;\doteq\; \sum_{j=1}^n \alpha_j\,g_{\theta_2}(\mathbf{x}_j)
  = \sum_{j=1}^n \alpha_j\,\mathbf{W}_{\theta_2}(\mathbf{x}_j)\,\mathbf{x}_j.
\end{equation}
Since $f_{\theta_1}$ satisfies dynamic linearity,
$f_{\theta_1}(\mathbf{s}) = \mathbf{W}_{\theta_1}(\mathbf{s})\,\mathbf{s}$.
Substituting~\eqref{eq:sum_s} gives
\[
  h_{(\theta_1,\theta_2)}(\tilde{\mathbf x}, \alpha)
  = \mathbf{W}_{\theta_1}(\mathbf{s})
    \sum_{j=1}^n \alpha_j\,\mathbf{W}_{\theta_2}(\mathbf{x}_j)\,\mathbf{x}_j
  = \mathbf{W}_{(\theta_1,\theta_2)}(\tilde{\mathbf{x}})\,\tilde{\mathbf{x}},
\]
where
\[
  \mathbf{W}_{(\theta_1,\theta_2)}(\tilde{\mathbf{x}})
  \;\doteq\;
  \underbrace{\mathbf{W}_{\theta_1}(\mathbf{s})}_{\in \mathbb{R}^{r \times q}}\,
  \underbrace{\bigl[\alpha_1\,\mathbf{W}_{\theta_2}(\mathbf{x}_1)
        \;\cdots\;
        \alpha_n\,\mathbf{W}_{\theta_2}(\mathbf{x}_n)\bigr]}_{\in \mathbb{R}^{q \times np}}
  \;\in\; \mathbb{R}^{r \times np}.
\]

This matrix depends only on $\tilde{\mathbf{x}}$, $\theta_1$, $\theta_2$, and the scalars
$\alpha$. 
Hence $h_{(\theta_1,\theta_2)}$ satisfies dynamic linearity.
\end{proof}

\begin{proposition}
\label{prop1}
If all $\Phi^{(k)}$ and $\Psi^{(k)}$ ($k = 1, \ldots, L$) in a message-passing GNN (\ref{eq:message_passing}) satisfy dynamic linearity and the aggregation is carried out exclusively via weighted summation with weights given by $\mathbf{A}$, then for every node $i$ and depth $k \in \{0, \ldots, L\}$ the node embeddings satisfy
\[
  \mathbf{x}^{(k)}_i
  = \mathbf{W}^{(k)}_i(\mathbf{X}, \mathbf{A})\,\mathbf{x}\quad\text{and}\quad
  \mathbf x^{(k)} = \mathbf{W}^{(k)}(\mathbf{X}, \mathbf{A})\,\mathbf{x}
\]
where $\mathbf{x} \in \mathbb{R}^{np_0}, \mathbf{x}^{(k)} \in \mathbb{R}^{np_k},$ are the stacked node features and embeddings, $\mathbf{W}^{(k)}_i(\mathbf X, \mathbf A) \in \mathbb{R}^{p_k \times np_0}$ depends only on the graph input $(\mathbf{X}, \mathbf{A}$) and the GNN parameters until layer $k$, and 
$\mathbf{W}^{(k)}(\mathbf{X}, \mathbf{A}) \doteq [(\mathbf{W}^{(k)}_1(\mathbf{X}, \mathbf{A}))^T \ldots (\mathbf{W}^{(k)}_n(\mathbf{X}, \mathbf{A}))^T]^T$.
\end{proposition}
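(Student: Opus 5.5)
We argue by induction on the depth $k$. The claim about the stacked embedding $\mathbf x^{(k)}$ follows directly from the node-wise claim: we stack the $n$ row blocks $\mathbf W^{(k)}_i(\mathbf X,\mathbf A)$ vertically. So it suffices to prove the node-wise statement.

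\textbf{Base case $k=0$.} Here $\mathbf x^{(0)}_i$ is a sub-vector of $\mathbf x$. Let $\mathbf E_i \in \mathbb R^{p_0 \times np_0}$ be the constant selection matrix that picks out the $i$-th block, i.e.\ $\mathbf E_i = \mathbf e_i^\top \otimes \mathbf I_{p_0}$ under the node-major ordering of $\mathrm{vec}$. Then $\mathbf x^{(0)}_i = \mathbf E_i \mathbf x$. This matrix trivially depends only on the parameters and the input.

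\textbf{Inductive step.} Assume $\mathbf x^{(k-1)}_j = \mathbf W^{(k-1)}_j(\mathbf X,\mathbf A)\,\mathbf x$ for all $j$. We could invoke Lemma~\ref{lemma1} directly with $g=\Phi^{(k)}$ and $\alpha_j=\mathbf A_{ij}$. However, the lemma treats its inputs $\mathbf x_j$ as independent stacked variables, whereas here every $\mathbf x^{(k-1)}_j$ is itself a dynamic-linear image of the same $\mathbf x$. We therefore redo the lemma's computation in place.

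First, by dynamic linearity of $\Phi^{(k)}$,
\[
\mathbf m_i \doteq \sum_j \mathbf A_{ij}\Phi^{(k)}(\mathbf x^{(k-1)}_j)
= \Big(\sum_j \mathbf A_{ij}\,\mathbf W_{\Phi^{(k)}}(\mathbf x^{(k-1)}_j)\,\mathbf W^{(k-1)}_j(\mathbf X,\mathbf A)\Big)\mathbf x .
\]
Call the bracketed matrix $\mathbf M_i(\mathbf X,\mathbf A)$.

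Second, $\Psi^{(k)}$ takes two arguments. We read it as a dynamically linear map acting on the concatenation $\mathbf z_i = [\mathbf x^{(k-1)\top}_i, \mathbf m_i^\top]^\top$, which is how Definition~\ref{def:dynlin} applies to a two-argument function. Then
\[
\mathbf z_i = \begin{bmatrix}\mathbf W^{(k-1)}_i(\mathbf X,\mathbf A)\\ \mathbf M_i(\mathbf X,\mathbf A)\end{bmatrix}\mathbf x,
\]
and hence
\[
\mathbf x^{(k)}_i = \Psi^{(k)}(\mathbf z_i) = \mathbf W_{\Psi^{(k)}}(\mathbf z_i)\begin{bmatrix}\mathbf W^{(k-1)}_i(\mathbf X,\mathbf A)\\ \mathbf M_i(\mathbf X,\mathbf A)\end{bmatrix}\mathbf x .
\]
We define $\mathbf W^{(k)}_i(\mathbf X,\mathbf A)$ as this product. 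It has shape $p_k\times np_0$.

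\textbf{Dependence.} Every factor is a function only of $\mathbf X$ (equivalently $\mathbf x$), $\mathbf A$, and the parameters of layers $1,\dots,k$:
\begin{itemize}
\item the evaluation points $\mathbf x^{(k-1)}_j$ and $\mathbf z_i$ are themselves functions of $(\mathbf X,\mathbf A)$ by the induction hypothesis;
\item the weights $\mathbf A_{ij}$ enter only as constants from the input;
\item $\mathbf W_{\Phi^{(k)}}$ and $\mathbf W_{\Psi^{(k)}}$ depend only on their arguments and on the parameters of layer $k$.
\end{itemize}
This closes the induction.

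\textbf{Main obstacle.} The hard part is not the algebra but being precise about the definitions. The inductive step needs a dynamically linear function of dynamically linear functions of the \emph{same} $\mathbf x$ to again be dynamically linear in $\mathbf x$. Lemma~\ref{lemma1} alone does not give this, since it is stated for independent inputs; the fact that makes it work is that $\mathbf W(\mathbf x)\mathbf x$ composed with $\mathbf V(\mathbf x)\mathbf x$ equals $(\mathbf W(\mathbf V(\mathbf x)\mathbf x)\mathbf V(\mathbf x))\mathbf x$. We will also state explicitly how the two-argument $\Psi^{(k)}$ is interpreted. For GIN, $\Psi^{(k)}$ takes $(1+\epsilon)\mathbf x_i+\mathbf m_i$, which is a constant linear map of $\mathbf z_i$ followed by a B-cos stack, so it fits this framework.
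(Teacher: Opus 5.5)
Your proof is correct and follows the paper's induction essentially step for step. The paper applies Lemma~\ref{lemma1} with $f_{\theta_1}$ the identity and then substitutes the inductive hypothesis, which produces exactly your matrix $\sum_j \mathbf A_{ij}\,\mathbf W_{\Phi^{(k)}}(\mathbf x^{(k-1)}_j)\,\mathbf W^{(k-1)}_j(\mathbf X,\mathbf A)$, and it likewise treats $\Psi^{(k)}$ as dynamically linear in the concatenated input and uses the block selector $\mathbf E_i$ for the base case, so inlining the lemma is only a presentational difference.
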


\begin{proof}
We proceed by induction on the depth $k$. 

\textbf{Base Case ($k=0$):} 
The property holds trivially for $\mathbf{x}^{(0)}_i$. We can define $\mathbf{W}^{(0)}_i = \mathbf{E}_i$, where $\mathbf{E}_i$ is the block-selector matrix such that $\mathbf{E}_i \mathbf{x} = \mathbf{x}_i$.

\textbf{Inductive Step:} 
Suppose $\mathbf{x}^{(k)}_j = \mathbf{W}^{(k)}_j(\mathbf{X},\mathbf{A})\,\mathbf{x}$ for all nodes $j$ at depth $k$. Using $\mathbf a_i$ to denote the $i$-th row of $\mathbf A$, at layer $k+1$, the aggregated message is:
\[
  \mathbf{z}_i^{(k+1)}
  = \sum_{j=1}^{n} \mathbf a_{ij}\,\Phi^{(k+1)}\!\left(\mathbf{x}_j^{(k)}\right).
\]
By Lemma~\ref{lemma1} (using scalar weights $\alpha = {\mathbf a}_i$, $g_{\theta_2} = \Phi^{(k+1)}$ and the identity function for $f_{\theta_1}$), 
$\mathbf{z}_i^{(k+1)}$ is dynamically linear in $\mathbf x^{(k)}$ with weights denoted by $\mathbf W^{(k+1)}_\Phi(\mathbf x^{(k)}, \mathbf A)$ that only depend on $\mathbf x^{(k)}$, $\mathbf A$ ($\mathbf a_i$) and the parameters of $\Phi^{(k+1)}$.
The previous node embeddings, $\mathbf{x}^{(k)}$, are also linear in $\mathbf{x}$ by the inductive hypothesis, thus $\mathbf{z}_i^{(k+1)}$ is also linear in $\mathbf{x}$, \ie,
\begin{align*}
    \mathbf{z}_i^{(k+1)} &= \mathbf W^{(k+1)}_\Phi(\mathbf x^{(k)}, \mathbf A) \mathbf x^{(k)} \\
     &= 
     \underbrace{\mathbf W^{(k+1)}_\Phi(\mathbf W^{(k)}(\mathbf X, \mathbf A) \mathbf x, \mathbf A) \mathbf W^{(k)}(\mathbf X, \mathbf A)}_{= \mathbf W^{(k+1)}_z(\mathbf X, \mathbf A)}
     \mathbf x
\end{align*}

with a weight matrix $\mathbf W^{(k+1)}_z(\mathbf X, \mathbf A)$ depending only on $\mathbf{X}$ and $\mathbf{A}$.

The node update is then:
\[
  \mathbf{x}_i^{(k+1)}
  = \Psi^{(k+1)}\!\left(\mathbf{x}_i^{(k)},\;\mathbf{z}_i^{(k+1)}\right).
\]
Since $\Psi^{(k+1)}$ satisfies dynamic linearity, it can be expressed as:
\[
  \mathbf{x}_i^{(k+1)} = \mathbf{W}_{\Psi}^{(k+1)}\!\left(\mathbf{x}_i^{(k)},\,\mathbf{z}_i^{(k+1)}\right)
    \begin{pmatrix}\mathbf{x}_i^{(k)} \\ \mathbf{z}_i^{(k+1)}\end{pmatrix}.
\]
Substituting the expressions for $\mathbf{x}_i^{(k)}$ and $\mathbf{z}_i^{(k+1)}$ (both of which are linear in $\mathbf{x}$), we see that $\mathbf{x}_i^{(k+1)}$ is likewise linear in $\mathbf{x}$. Stacking the coefficient matrices yields $\mathbf{W}^{(k+1)}_i(\mathbf{X},\mathbf{A})$, which depends only on $\mathbf{X}$ and $\mathbf{A}$ through the intermediate representations. This completes the induction.
\end{proof}

\paragraph{Dynamic linearity of readout}
By Lemma~\ref{lemma1} (with $\alpha_i = 1$ for all $i$, $f_{\theta_1} = \Theta$ and $g_{\theta_2}$ as the identity function) we immediately see that GNN predictions obtained through global sum pooling
followed by any dynamically linear readout module $\Theta$ also satisfy
dynamic linearity.

\section{RINGS Framework Principles}
\label{app:rings}

To ensure that our evaluation measures genuine graph reasoning capabilities rather than the identification of spurious correlations, we strictly adhere to the RINGS framework \cite{coupette2025no}. RINGS argues that many standard datasets in Graph XAI fail to test structural reasoning because they can be solved by "trivial" models (e.g., DeepSets or MLPs) that ignore topology entirely.

We filter our dataset selection to satisfy two critical principles:
\begin{itemize}
    \item \textbf{P1 (Structural Necessity):} The task label must strictly depend on the graph topology (e.g., specific motifs or ring structures) rather than being solvable by node features alone. This disqualifies datasets where a simple MLP could achieve high accuracy, as explanations in such settings often devolve into feature importance maps rather than structural rationales.
    \item \textbf{P2 (Feature-Topology Interdependence):} The node features and graph structure must provide complementary information. The GNN must effectively combine both to solve the task (e.g., identifying \textit{specific} atoms in a \textit{specific} arrangement).
\end{itemize}

By prioritizing benchmarks that satisfy these conditions (such as the Di-Halo-Benzene isomers and BA-2Motif), we ensure that the ground-truth rationale ($G_S$) contains the minimal information required for classification, while the background graph serves as a true confounding factor.

\section{Dataset Details}
\label{app:datasets}

We provide detailed statistics and construction protocols for the datasets used in our evaluation. The summary of dataset statistics is provided in Table \ref{tab:dataset_stats}.

\begin{table}[h]
\centering
\caption{Summary statistics for all benchmark datasets. \textbf{Avg. Nodes} and \textbf{Avg. Edges} refer to the mean number of nodes and edges per graph.}
\label{tab:dataset_stats}
\begin{small}
\begin{sc}
\begin{tabular}{lcccccc}
\toprule
\textbf{Dataset} & \textbf{Task Type} & \textbf{Graphs} & \textbf{Classes} & \textbf{Avg. Nodes} & \textbf{Avg. Edges} & \textbf{Metric} \\
\midrule
\multicolumn{7}{l}{\textit{Predictive Performance Benchmarks}} \\
PATTERN & Node Class. & 14,000 & 2 & 118.9 & 4,836.2 & Acc/F1 \\
NCI1 & Graph Class. & 4,110 & 2 & 29.9 & 32.3 & Acc/F1 \\
\midrule
\multicolumn{7}{l}{\textit{Explainability Benchmarks (Semi-Synthetic)}} \\
BA-2Motif (Custom) & Graph Class. & 1,000 & 2 & 23.70 & 48.40 & Acc/F1/Jaccard/AUC \\
Di-Halo-Benzene & Graph Class. & 9000 & 9 & 51.65 & 108.79 & Acc/F1/Jaccard/AUC \\
\bottomrule
\end{tabular}
\end{sc}
\end{small}
\end{table}

\subsection{Standard Benchmarks}

\textbf{PATTERN} \cite{dwivedi2020benchmarking} is a large-scale node classification dataset generated via Stochastic Block Models (SBM). The graphs simulate communities with varying intra- and inter-community connection probabilities. The task is to classify nodes based on whether they belong to a specific embedded pattern ($\mathcal{P}$) or the background ($\mathcal{G}$). The dataset contains 10,000 training, 2,000 validation, and 2,000 test graphs. We use this dataset to test the model's ability to discern local topological structures in dense graphs.

\textbf{NCI1} \cite{wale2008comparison} is a biological dataset provided by the National Cancer Institute, consisting of 4,110 chemical compounds screened for activity against non-small cell lung cancer. The dataset is balanced between active and inactive compounds.  

\textbf{MNIST-75sp Feature Encoding}\\
We extend the standard MNIST-75sp encoding to four channels per superpixel 
to follow the B-cos convention of complementary feature encodings 
\cite{bohle2022b}: (i) mean greyscale intensity in $[0,1]$; (ii) 
$1 - \text{greyscale}$; (iii) and (iv) normalized $(x, y)$ centroid 
position in $[0,1]^2$. This four-channel encoding is used identically 
across B-cos GIN, conventional GIN, and GSAT-GIN to ensure fair 
comparison.

\subsection{Custom Explainability Benchmarks}

To enable rigorous quantitative evaluation of explanation fidelity, we constructed two semi-synthetic datasets where the "ground truth" rationale is strictly defined.

\textbf{Custom BA-2Motif} \\
We generated a variant of the BA-2Motif dataset to ensure precise node-level ground truth.
\begin{itemize}
    \item \textbf{Base Graphs:} Barabási-Albert (BA) graphs serving as the random background structure.
    \item \textbf{Motifs:} We insert one of two 5-node motifs: a "House" (class 0) or a "Cycle" (class 1).
    \item \textbf{Features:} Node features are one-hot encoded degree vectors. Edge features are not used.
    \item \textbf{Ground Truth:} A binary mask is generated for every graph, labeling the 5 motif nodes as 1 and all background nodes as 0. This allows us to compute the exact Intersection-over-Union (Jaccard) between the model's top-5 explanations and the true motif.
\end{itemize}

\textbf{Di-Halo-Benzene-Isomer} \\
This dataset was designed to test the model's sensitivity to geometric isomers and edge attributes in a chemical context.
\begin{itemize}
    \item \textbf{Construction:} We extracted random molecular scaffolds from the ZINC database to serve as realistic background structures. On each scaffold, we grafted a di-halo-benzene ring.
    \item \textbf{Classes (9 total):} The class label is determined by the combination of the halogen type (Chlorine, Fluorine, Bromine) and its positional isomerism on the benzene ring (Ortho, Meta, Para).
    \item \textbf{Constraints:} 
    \begin{itemize}
        \item Node features: One-hot encoded atom type (C, N, O, F, Cl, Br, etc.).
        \item Edge features: One-hot bond type (Single, Double, Triple, Aromatic).
    \end{itemize}
    \item \textbf{Ground Truth:} The explanation mask includes the 6 carbon atoms of the benzene ring and the 2 attached halogen atoms. The challenge for the model is to ignore the potentially large and scaffold but correctly identify the ring and the precise distance (number of hops) between the halogens to distinguish isomers (e.g., Ortho vs. Meta).
\end{itemize}

\section{Qualitative analysis}
\label{app:explanation_examples}
The quantitative superiority of B-cos GNNs is further supported by visual inspection of the generated contribution maps.

\textbf{MNIST-75P:}
We display one randomly sampled data point together with ground truth, GSAT and B-cos GIN explanatory subgraphs in Fig.~\ref{fig:bcos_gsat_comp}.

\textbf{Topological Isolation (BA-2Motif):}
Figure \ref{fig:ba2motif_viz} presents a direct comparison between the ground-truth motif and the B-cos explanation over a sample of the BA-2motif dataset.
As illustrated, the model prioritizes the nodes constituting the "House" and "Cycle" structures (red nodes), while substantially attenuating the influence of the random attachment graph. 

\begin{figure*}[t!]
\begin{center}
\centerline{\includegraphics[width=\textwidth]{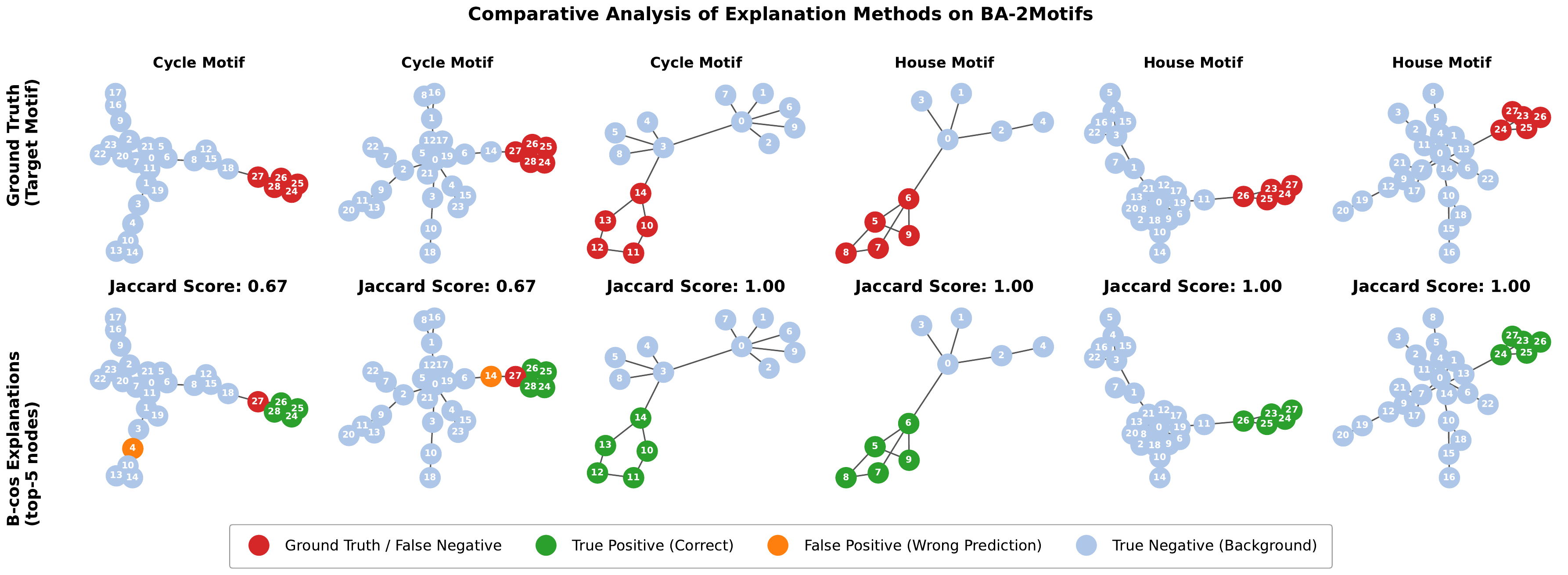}}
\caption{Visualizing Ground Truth rationales (Top row) against the learned B-cos Explanations (bottom row) of Di-Halo-Benzene-Isomer dataset. The Jaccard score between the ground-truth motif nodes (marked in blue) and the top-k nodes identified by the B-cos GINE model is shown above each molecular graph, along with the predicted label at the bottom.}
\label{fig:ba2motif_viz}
\end{center}
\vskip -0.2in
\end{figure*}

\textbf{Chemical Precision (Di-Halo-Benzene):}
Figure \ref{fig:dihalo_viz} demonstrates the model's ability to discern subtle geometric isomers. 
Classification in this dataset requires identifying the relative positions (ortho, meta, para) of specific halogen atoms(Cl, F, Br). 
The B-cos contribution maps consistently identify the benzene ring and the constituent chlorine, fluorine, or bromine atoms, while assigning significantly lower importance to the complex molecular scaffold. 
Crucially, the model distinguishes between isomers (e.g., \textit{di-chloro-ortho} vs. \textit{di-chloro-meta}) by attributing high importance to the relevant carbon atoms on the ring.
\begin{figure*}[t!]
\begin{center}
\centerline{\includegraphics[width=\textwidth]{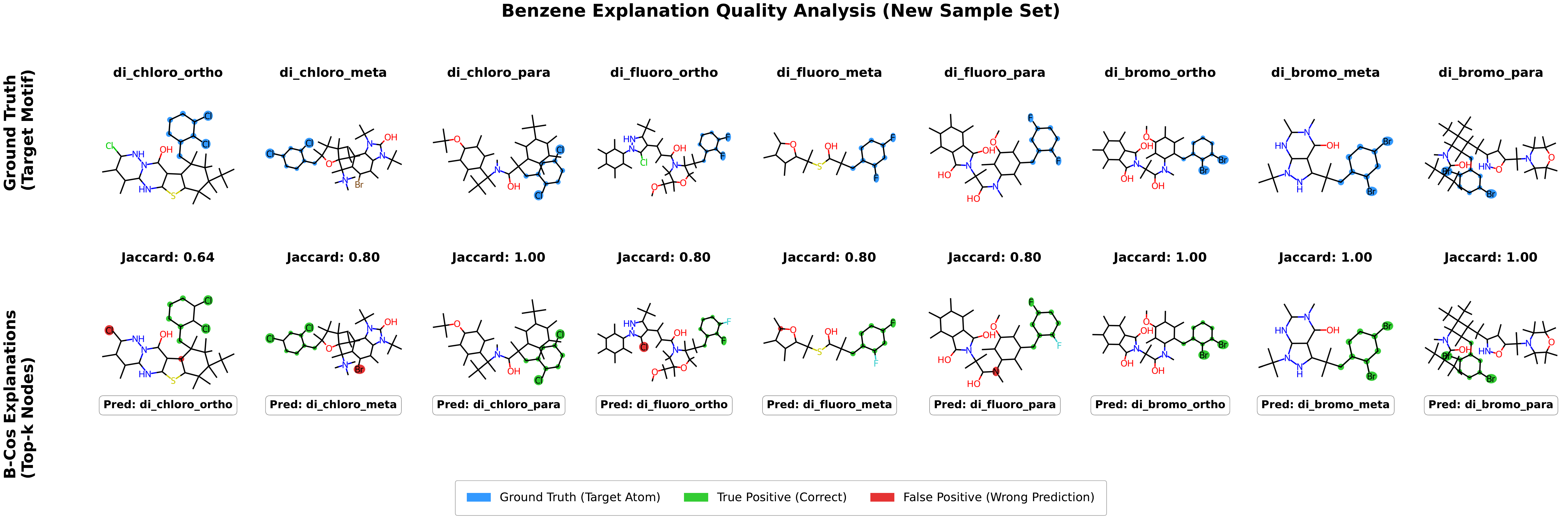}}
\caption{Visualizing Ground Truth rationales (Top row) against the learned B-cos Explanations (bottom row) of Di-Halo-Benzene-Isomer dataset. The Jaccard score between the ground-truth motif nodes (marked in blue) and the top-k nodes identified by the B-cos GINE model is shown above each molecular graph, along with the predicted label at the bottom.}
\label{fig:dihalo_viz}
\end{center}
\vskip -0.2in
\end{figure*}

\begin{figure*}[t!]
\begin{center}
\centerline{\includegraphics[width=\textwidth]{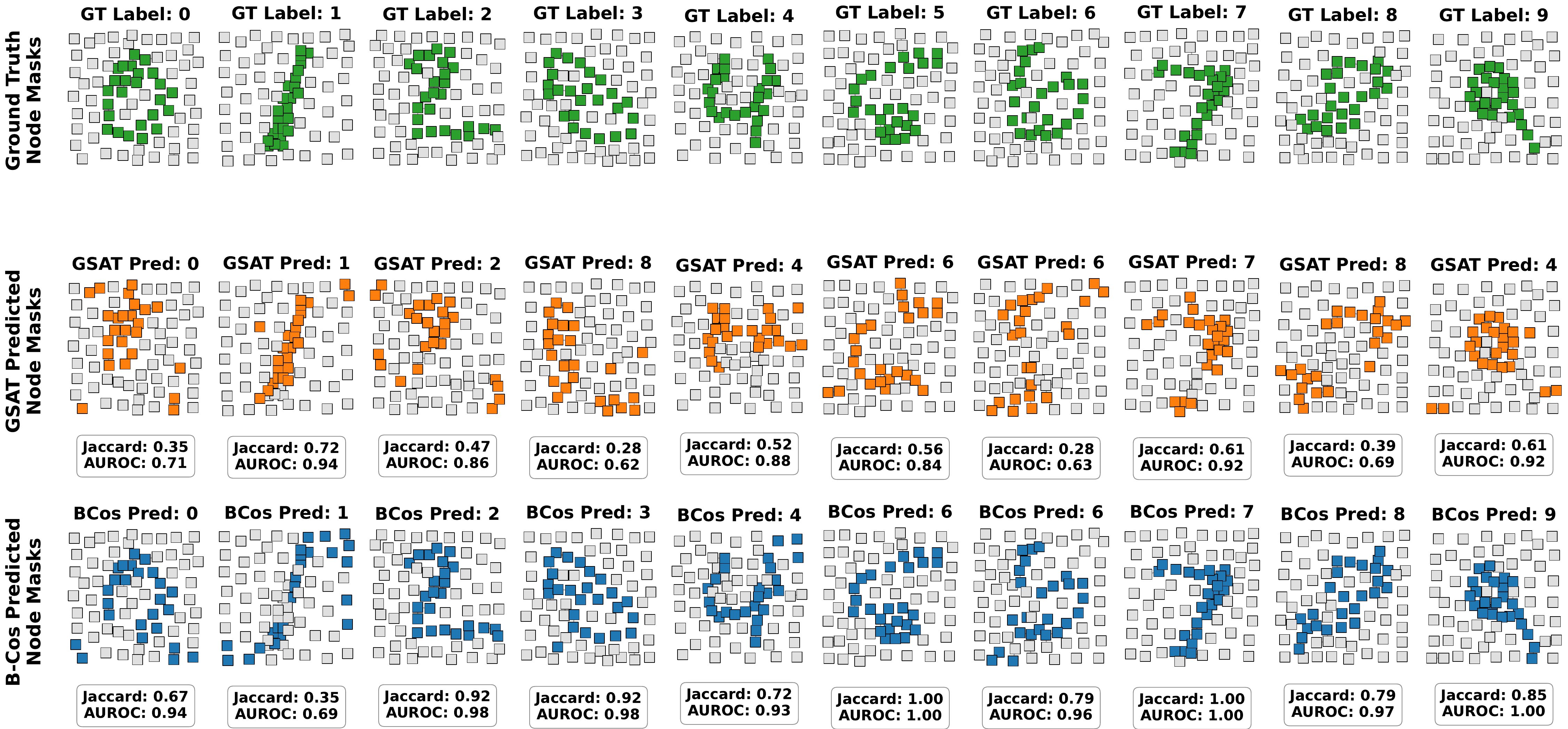}}
\caption{Qualitative comparison of explanation masks on the MNIST-75sp dataset. The top row displays the ground-truth node masks defining the structural rationale for digits 0 through 9. The middle and bottom rows illustrate the predicted explanation masks generated by the GSAT baseline and our proposed B-cos GNN, respectively. The Jaccard score and AUROC between the top-k predicted nodes and the ground-truth motif (highlighted in blue for B-cos) are provided below each prediction, alongside the predicted class label.}
\label{fig:bcos_gsat_comp}
\end{center}
\vskip -0.2in
\end{figure*}

\section{Model Architectures and Hyperparameters}
\label{app:architectures}

In this section, we detail the specific architectures and hyperparameter settings used for the experiments. We describe the models as instantiations of the general message-passing framework (Eq. \ref{eq:message_passing}) defined in the \textbf{Background}. To ensure a rigorous comparison, we standardize the readout mechanism across all experiments, distinguishing the models solely by their core transformation logic: the proposed B-cos architectures enforce dynamic linearity (Eq. \ref{eq:input_dependent_linear_transform}), whereas the "Vanilla" baselines utilize standard ReLU non-linearities. To prevent overfitting and ensure stable convergence, we employed early stopping alongside dynamic learning rate adjustment. The optimization process began with a learning rate of $10^{-3}$, which was decayed by a factor of $0.5$ whenever the validation metric stagnated for 25 epochs, down to a minimum of $10^{-6}$. Training was terminated if the validation performance failed to improve within a 25-epoch patience window.

\subsection{GIN Architectures (BA-2Motif, PATTERN \& NCI1)}
\label{app:gin_architectures}

For the BA-2Motif, PATTERN, and NCI1 datasets, we employ \textbf{Graph Isomorphism Network (GIN)} architectures. As discussed in Section \ref{sec:bcos_gnn}, GIN is selected as the backbone because its sum-aggregation preserves the linear path required for faithful B-cos contribution maps.

\textbf{B-cos GIN Formulation} \\
The B-cos GIN implements the update rule defined in Eq. \ref{eq:bcos_gin_update} of the Method section. In the context of the general MPNN notation (Eq. \ref{eq:message_passing}), the B-cos GIN defines the update function $\Psi^{(k)}$ as a \textbf{B-cos MLP} and the message function $\Phi^{(k)}$ as the identity:
\begin{equation}
    \mathbf{h}_i^{(k)} = \Psi^{(k)}_{\text{Bcos}} \left( (1 + \epsilon) \mathbf{h}_i^{(k-1)} + \sum_{j \in \mathcal{N}(i)} \mathbf{h}_j^{(k-1)} \right).
\end{equation}
Here, $\Psi^{(k)}_{\text{Bcos}}$ consists of stacked B-cos transformations (Eq. \ref{eq:bcos}), ensuring that the mapping from input features to output remains dynamically linear. This allows for the exact computation of node contribution maps $\mathbf{M}$ as derived in Eq. \ref{eq:gnn_contribution_map}.

\textbf{Vanilla GIN Formulation} \\
The baseline Vanilla GIN uses the same aggregation structure but employs standard non-linearities that do not satisfy the B-cos properties. The update function $\Psi^{(k)}$ is a standard Multi-Layer Perceptron (MLP) with ReLU activations:
\begin{equation}
    \mathbf{h}_i^{(k)} = \text{MLP}^{(k)}_{\text{ReLU}} \left( (1 + \epsilon) \mathbf{h}_i^{(k-1)} + \sum_{j \in \mathcal{N}(i)} \mathbf{h}_j^{(k-1)} \right).
\end{equation}

\textbf{Readout Strategy (Readout-Then-Agg)} \\
To maintain consistency and support node-based interpretability, both the B-cos model and the Vanilla baseline employ a \textbf{Readout-Then-Aggregate} strategy. The readout module $\Theta$ is applied in a row-wise fashion to the final node representations $\mathbf{x}^{(L)}$ \textit{before} global pooling:
\begin{equation}
    \mathbf{\hat{y}} = \sum_{i \in \mathcal{V}} \Theta(\mathbf{x}_i^{(L)})
\end{equation}
For the B-cos GNN, $\Theta$ is a B-cos MLP, enabling the exact calculation of node-specific weights $\mathbf{w}_{c,i}$ (Section \ref{sec:bcos_gnn}). For the Vanilla baseline, $\Theta$ is a standard ReLU-MLP. This unifies the architectural depth and pooling logic, ensuring that performance differences stem strictly from the choice of non-linearity (B-cos vs. ReLU).

\begin{table}[h]
\centering
\caption{Hyperparameter comparison for GIN architectures (BA-2Motif, PATTERN, NCI1).}
\label{tab:gin_comparison}
\begin{small}
\begin{sc}
\begin{tabular}{lcc}
\toprule
\textbf{Parameter} & \textbf{B-cos GIN} & \textbf{Vanilla GIN} \\
\midrule
\textbf{Backbone} & & \\
Conv Layer Type & B-cos GIN (Eq. \ref{eq:bcos_gin_update}) & GINConv \\
Hidden Channels & 64 & 64 \\
Num Layers ($L$) & 3 & 3 \\
$\epsilon$ (eps) & 0.0 (fixed) & Trainable \\
\midrule
\textbf{Readout} & & \\
Strategy & \textbf{Readout-Then-Agg} & \textbf{Readout-Then-Agg} \\
Pooling & Sum & Sum \\
Classifier $\Theta$ & 3-Layer B-cos MLP & 3-Layer ReLU MLP \\
\midrule
\textbf{Training} & & \\
Dropout & 0.0 & 0.0 \\
Optimizer & Adam & Adam \\
$B$ Parameter & 2.0 & N/A \\
\bottomrule
\end{tabular}
\end{sc}
\end{small}
\end{table}

\begin{table}[ht]
\centering
\caption{Architecture and hyperparameters: GSAT + GIN (Compact summary)}
\label{tab:gsat_gin_compact}
\renewcommand{\arraystretch}{1.15} 
\begin{tabular}{@{}p{0.38\linewidth} p{0.58\linewidth}@{}}
\toprule
\textbf{\textsc{Parameter}} & \textbf{\textsc{Specification}} \\
\midrule
\textbf{\textsc{Backbone}} & \\
\quad \textsc{Type} & Vanilla GIN (Maskable \texttt{GINConv}) \\
\quad \textsc{Conv Stack} & 4 Layers (\texttt{aggr='add'}) \\
\quad \textsc{Hidden Channels} & 64 \\

\midrule
\textbf{\textsc{Readout}} & \\
\quad \textsc{Pooling} & \texttt{SumAggregation} \\
\quad \textsc{Classifier} $\Theta$ & 2-Layer MLP ($H \rightarrow H/2 \rightarrow \text{Classes}$) with Dropout(0.3) \\
\midrule
\textbf{\textsc{GSAT Attention}} & \\
\quad \textsc{Attention MLP} & Concat$(x_u, x_v) \rightarrow 3$-Layer MLP ($2H \rightarrow H \rightarrow H \rightarrow 1$) \\
\quad \textsc{Mask Sampling} & Logistic noise + Sigmoid (Train) / Sigmoid (Inference) \\
\quad \textsc{Temperature} ($\tau$) & Linear decay ($1.0 \rightarrow 0.1$) \\
\midrule
\textbf{\textsc{Training}} & \\
\quad \textsc{Loss Function} & CrossEntropy + $\lambda_{\text{info}} \times \text{BCE}(\text{mask}, 1-r)$ \\
\quad \textsc{Default Coeffs} & Prior $r=0.7$, $\lambda_{\text{info}} = 1.0$ \\
\quad \textsc{Hyperparameters} & $H=64$, Adam ($10^{-3}$), Batch 64/128, Epochs 100 \\
\bottomrule
\end{tabular}
\end{table}

\subsection{GINE Architectures (Di-Halo-Benzene)}
\label{app:gine_architectures}

For our edge-attributed dataset, we utilize \textbf{GINE} architectures. These models explicitly utilize the edge features $\mathbf{E}$ defined in Section 2.1.

\textbf{B-cos GINE Formulation} \\
As described in "Handling Edge Attributes" (Section \ref{sec:bcos_gnn}), the B-cos GINE injects edge information $\mathbf{e}_{ij}$ additively before the alignment step. The update rule is:
\begin{equation}
    \mathbf{h}_i^{(k)} = \Psi^{(k)}_{\text{Bcos}} \left( (1 + \epsilon) \mathbf{h}_i^{(k-1)} + \sum_{j \in \mathcal{N}(i)} (\mathbf{h}_j^{(k-1)} + \mathbf{e}_{ji}) \right).
\end{equation}
By treating the edge features as additive modifiers to the neighbor features within the sum aggregation, the entire operation remains within the scope of the B-cos transformation $\Psi^{(k)}_{\text{Bcos}}$, preserving the global dynamic linearity required for explanations.

\textbf{Vanilla GINE Formulation} \\
The Vanilla GINE baseline follows the standard formulation \cite{hu2020strategies}, which applies a ReLU non-linearity during the message construction step $\Phi^{(k)}$:
\begin{equation}
    \mathbf{h}_i^{(k)} = \text{MLP}^{(k)}_{\text{ReLU}} \left( (1 + \epsilon) \mathbf{h}_i^{(k-1)} + \sum_{j \in \mathcal{N}(i)} \text{ReLU}(\mathbf{h}_j^{(k-1)} + \mathbf{e}_{ji}) \right).
\end{equation}
The presence of ReLUs in both the message aggregation and the update MLP $\Psi$ creates a highly non-linear decision boundary.

\textbf{Network Architecture} \\
Consistent with the complexity of these tasks, both GINE variants are deeper ($L=4$) and wider ($d=128$) than the GIN models. Both models utilize the "Readout-Then-Agg" strategy to ensure $\mathbf{\hat y} = \sum \Theta(\mathbf{x}_i^{(L)})$. This setup facilitates a direct comparison of explanation fidelity, as the Vanilla model's logits are also derived from explicit node-level contributions (albeit non-linear ones).

\begin{table}[h]
\centering
\caption{Hyperparameter comparison for GINE architectures (Di-Halo-Benzene).}
\label{tab:gine_comparison}
\begin{small}
\begin{sc}
\begin{tabular}{lcc}
\toprule
\textbf{Parameter} & \textbf{B-cos GINE} & \textbf{Vanilla GINE} \\
\midrule
\textbf{Backbone} & & \\
Conv Layer Type & B-cos GINE & GINEConv \\
Hidden Channels & 128 & 128 \\
Num Layers ($L$) & 4 & 4 \\
$\epsilon$ (eps) & 0.0 (fixed) & Trainable \\
\midrule
\textbf{Readout} & & \\
Strategy & \textbf{Readout-Then-Agg} & \textbf{Readout-Then-Agg} \\
Pooling & Sum & Sum \\
Classifier $\Theta$ & 2-Layer B-cos MLP & 2-Layer ReLU MLP \\
\midrule
\textbf{Training} & & \\
Dropout & 0.5 & 0.5 \\
Optimizer & Adam & Adam \\
$B$ Parameter & 2.0 & N/A \\
\bottomrule
\end{tabular}
\end{sc}
\end{small}
\end{table}

\begin{table}[ht]
\centering
\caption{Architecture and hyperparameters: GSAT + GINE (Compact summary)}
\label{tab:gsat_gine_summary}
\renewcommand{\arraystretch}{1.2}
\begin{tabular}{@{}ll@{}}
\toprule
\textbf{\textsc{Parameter}} & \textbf{\textsc{Specification}} \\
\midrule
\textbf{\textsc{Backbone}} & \\
\quad \textsc{Type} & Vanilla / Maskable GINE (uses node + edge attributes) \\
\quad \textsc{Conv Stack} & 4 Layers (\texttt{aggr='add'}), optional trainable $\epsilon$ \\
\quad \textsc{Hidden Channels} & 64 \\
\quad \textsc{Edge Mask Usage} & Multiplied into message: \texttt{msg * edge\_weight.view(-1,1)} \\
\midrule
\textbf{\textsc{Readout}} & \\
\quad \textsc{Pooling} & SumAggregation \\
\quad \textsc{Classifier $\Theta$} & 2-Layer MLP ($H \rightarrow H/2 \rightarrow \text{Classes}$) with Dropout(0.3) \\
\midrule
\textbf{\textsc{GSAT Attention}} & \\
\quad \textsc{Attention MLP} & Concat($x_u, x_v$, edge\_attr) $\rightarrow$ 3-Layer MLP ($\dots \rightarrow H \rightarrow H \rightarrow 1$) \\
\quad \textsc{Mask Sampling} & Gumbel/Concrete noise (Train) / Sigmoid (Inference) \\
\quad \textsc{Temperature ($\tau$)} & Linear decay ($1.0 \rightarrow 0.1$) \\
\midrule
\textbf{\textsc{Training}} & \\
\quad \textsc{Loss Function} & CE/BCE + $\lambda_{\text{info}} \times \text{BCE}(\text{mask}, 1 - r)$ \\
\quad \textsc{Default Coeffs} & Prior $r=0.7$, $\lambda_{\text{info}} = 1.0$ \\
\quad \textsc{Hyperparameters} & $H \in \{64, 128\}$, Adam/AdamW ($10^{-3}$), Batch 64, Epochs 80/100 \\

\bottomrule
\end{tabular}
\end{table}

\section{Reproducibility Statement}
\label{app:reproducibility}
 
We do not release code with this submission, but provide sufficient information 
needed to reproduce our results:
 
\begin{itemize}
    \item Architecture and hyperparameter specifications for B-cos GIN/GINE 
    and conventional GIN/GINE baselines (App.~\ref{app:gin_architectures}, 
    \ref{app:gine_architectures}, Tabs.~\ref{tab:gin_comparison}, 
    \ref{tab:gine_comparison}).
    \item GSAT baseline specifications, adapted from the authors' reference 
    implementation~\cite{miao2022interpretable} 
    (Tabs.~\ref{tab:gsat_gin_compact}, \ref{tab:gsat_gine_summary}).
    \item Standard benchmarks (PATTERN, NCI1, 
    MolHIV, MNIST-75sp, BA-2Motif) follow their original public 
    distributions.
    \item The Di-Halo-Benzene dataset is provided as supplementary material upon submission.
    \item Optimization details (Adam, learning-rate schedule, early-stopping 
    criterion) in App.~\ref{app:architectures}.
\end{itemize}
 
The B-cos transform implementation follows~\cite{bohle2022b}, whose public code we use unmodified for the per-layer transformation. 
Our contribution is the architectural integration with GIN/GINE, fully specified above. 
We commit to releasing a clean public implementation upon acceptance.

\end{document}